\documentclass{article}

\usepackage[final]{neurips_2019}

\usepackage[utf8]{inputenc} %
\usepackage[T1]{fontenc}    %
\usepackage{hyperref}       %
\usepackage{url}            %
\usepackage{booktabs}       %
\usepackage{amsfonts}       %
\usepackage{nicefrac}       %
\usepackage{microtype}      %
\usepackage{amsmath}
\usepackage{amsthm}
\usepackage{amssymb}
\usepackage{bm}
\usepackage{graphicx}
\usepackage{caption}
\usepackage{subcaption}
\usepackage{xcolor}
\usepackage{algorithm}
\usepackage[noend]{algpseudocode}

\hypersetup{
    colorlinks=true,
    linkcolor=blue,
    filecolor=magenta,      
    urlcolor=cyan,
    pdftitle={Sharelatex Example},
    bookmarks=true,
    pdfpagemode=FullScreen,
    }

\newtheorem{theorem}{Theorem}

\title{Learning Imbalanced Datasets with Label-Distribution-Aware Margin Loss}

\def\shownotes{1}  %
\ifnum\shownotes=1
\newcommand{\authnote}[2]{{$\ll$\textsf{\footnotesize #1 notes: #2}$\gg$}}
\else
\newcommand{\authnote}[2]{}
\fi

\newcommand{\lda}{\ldam}
\newcommand{\BMHinge}{\textup{LDAM-HG}}
\newcommand{\ldam}{\textup{LDAM}}
\newcommand{\ldamL}{\ensuremath{\mathcal{L}_{\ldam}}}
\newcommand{\BMCE}{{\ldam}}
\newcommand{\tstagew}{DRW}
\newcommand{\tstages}{DRS}
\newcommand{\ERM}{ERM}

\author{%
  Kaidi Cao \\ 
  Stanford University\\
  \texttt{kaidicao@stanford.edu}
  \And 
  Colin Wei \\ 
  Stanford University\\
  \texttt{colinwei@stanford.edu}
  \And
  Adrien Gaidon \\
  Toyota Research Institute \\
  \texttt{adrien.gaidon@tri.global}
  \And
  Nikos Arechiga \\
  Toyota Research Institute \\
  \texttt{nikos.arechiga@tri.global}
  \And
  Tengyu Ma \\ 
  Stanford University\\
  \texttt{tengyuma@stanford.edu}
}

\begin{document}

\maketitle

\begin{abstract}
Deep learning algorithms can fare poorly when the training dataset suffers from heavy class-imbalance but the testing criterion requires good generalization on less frequent classes. We design two novel methods to improve performance in such scenarios. First, we propose a theoretically-principled label-distribution-aware margin (LDAM) loss motivated by minimizing a margin-based generalization bound. This loss replaces the standard cross-entropy objective during training and can be applied with prior strategies for training with class-imbalance such as re-weighting or re-sampling. Second, we propose a simple, yet effective, training schedule that defers re-weighting until after the initial stage, allowing the model to learn an initial representation while avoiding some of the complications associated with re-weighting or re-sampling. We test our methods on several benchmark vision tasks including the real-world imbalanced dataset iNaturalist 2018. Our experiments show that either of these methods alone can already improve over existing techniques and their combination achieves even better performance gains\footnote{Code available at \url{https://github.com/kaidic/LDAM-DRW}.}.
\end{abstract}

\section{Introduction}

Modern real-world large-scale datasets often have long-tailed label distributions~\citep{van2017devil,krishna2017visual,lin2014microsoft,everingham2010pascal,guo2016ms,thomee2015yfcc100m,liu2019large}. On these datasets, deep neural networks have been found to perform poorly on less
represented classes~\citep{he2008learning, van2017devil,buda2018systematic}. This is particularly detrimental if the testing criterion places more emphasis on minority classes. For example, accuracy on a uniform label distribution or the minimum accuracy among all classes are examples of such criteria. These are common scenarios in many applications~\citep{cao2018pose,merler2019diversity,hinnefeld2018evaluating} due to various practical concerns such as transferability to new domains, fairness, etc. 

The two common approaches for learning long-tailed data are re-weighting the losses of the examples and re-sampling the examples in the SGD mini-batch (see~\citep{buda2018systematic,huang2016learning,cui2019classbalancedloss,he2008learning,he2013imbalanced,chawla2002smote} and the references therein). They both devise a training loss that is in expectation closer to the test distribution, and therefore can achieve better trade-offs between the accuracies of the frequent classes and the minority classes. However, because we have fundamentally less information about the minority classes and the models deployed are often huge, over-fitting to the minority classes appears to be one of the challenges in improving these methods. 

We propose to regularize the minority classes more strongly than the frequent classes so that we can improve the generalization error of minority classes without sacrificing the model's ability to fit the frequent classes. Implementing this general idea requires a data-dependent or label-dependent regularizer  --- which in contrast to standard $\ell_2$ regularization depends not only on the weight matrices but also on the labels --- to differentiate frequent and minority classes. The theoretical understanding of data-dependent regularizers is sparse (see~\citep{wei2019data,nagarajan2018deterministic,arora2018stronger} for a few recent works.) 

We explore one of the simplest and most well-understood data-dependent properties: the margins of the training examples. Encouraging a large margin can be viewed as regularization, as standard generalization error bounds (e.g.,~\citep{bartlett2017spectrally,wei2018margin}) depend on the inverse of the minimum margin among all the examples. Motivated by the question of generalization with respect to minority classes, we instead study the minimum margin \textit{per class} and obtain per-class and uniform-label test error bounds.\footnote{The same technique can also be used for other test label distribution as long as the test label distribution is known. See Section~\ref{sec:imbalanced_test} for some experimental results.} Minimizing the obtained bounds gives an optimal trade-off between the margins of the classes. See Figure~\ref{fig:margin} for an illustration in the binary classification case.  

Inspired by the theory, we design a label-distribution-aware loss function that encourages the model to have the optimal trade-off between per-class margins. The proposed loss extends the existing soft margin loss~\citep{wang2018additive} by encouraging the minority classes to have larger margins.  As a label-dependent regularization technique, our modified loss function is orthogonal to the re-weighting and re-sampling approach. In fact, we also design a deferred re-balancing optimization procedure that allows us to combine the re-weighting strategy with our loss (or other losses) in a more efficient way. 

In summary, our main contributions are (i) we design a label-distribution-aware loss function to encourage larger margins for minority classes, (ii) we propose a simple deferred re-balancing optimization procedure to apply re-weighting more effectively, and (iii) our practical implementation shows significant improvements on several benchmark vision tasks, such as artificially imbalanced CIFAR and Tiny ImageNet~\citep{tinyimagenet}, and the real-world large-scale imbalanced dataset iNaturalist'18~\citep{van2018inaturalist}.

\begin{figure}
  \begin{minipage}[c]{0.47\textwidth}
    \includegraphics[width=0.9\textwidth]{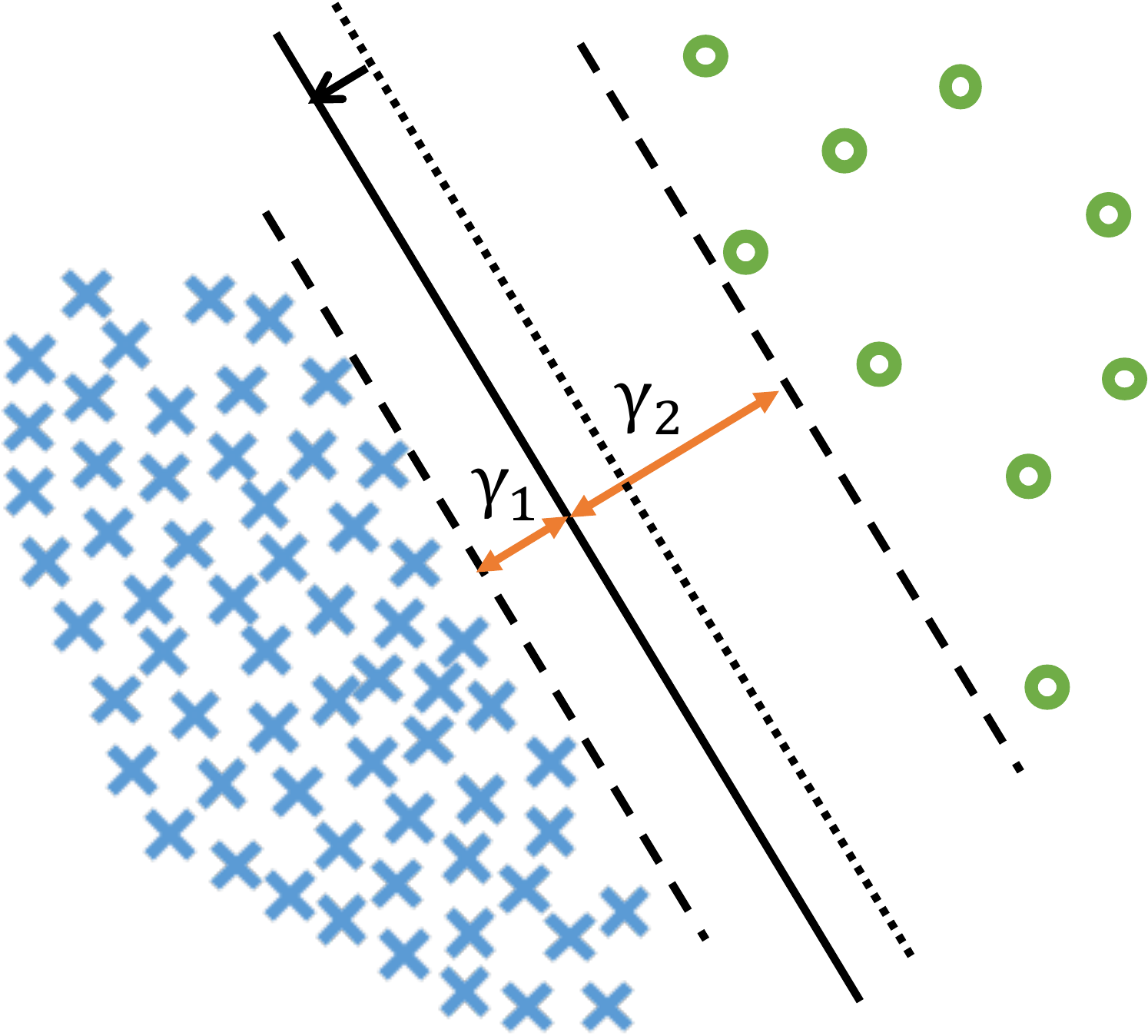}
  \end{minipage}\hfill
  \begin{minipage}[c]{0.49\textwidth}
    \caption{
       For binary classification with a linearly separable classifier, the margin $\gamma_i$ of the $i$-th class is defined to be the the minimum distance of the data in the $i$-th class to the decision boundary. We show that the test error with the uniform label distribution is bounded by a quantity that scales in $\frac{1}{\gamma_1\sqrt{n_1}} + \frac{1}{\gamma_2\sqrt{n_2}}$. 
    As illustrated here, fixing the direction of the decision boundary leads to a fixed $\gamma_1+\gamma_2$, but the trade-off between $\gamma_1,\gamma_2$ can be optimized by shifting the decision boundary. As derived in Section~\ref{sec:theory}, the optimal trade-off is $\gamma_i \propto n_i^{-1/4}$ where $n_i$ is the sample size of  the $i$-th class.
    } \label{fig:margin}	
  \end{minipage}
\end{figure}

\section{Related Works}

Most existing algorithms for learning imbalanced datasets can be divided in to two categories: re-sampling and re-weighting.

\textbf{Re-sampling.} There are two types of re-sampling techniques: over-sampling the minority classes (see e.g., \citep{shen2016relay,zhong2016towards,buda2018systematic,byrd2019what} and references therein) and under-sampling the frequent classes (see, e.g., \citep{he2008learning,japkowicz2002class,buda2018systematic} and the references therein.)
The downside of under-sampling is that it discards a large portion of the data and thus is not feasible when data imbalance is extreme. Over-sampling is effective in a lot of cases but can lead to over-fitting of the minority classes~\citep{chawla2002smote,cui2019classbalancedloss}. Stronger data augmentation for minority classes can help alleviate the over-fitting~\citep{chawla2002smote,zou2018domain}.

\textbf{Re-weighting.} Cost-sensitive re-weighting assigns (adaptive) weights for different classes or even different samples. The vanilla scheme re-weights classes proportionally to the inverse of their frequency \citep{huang2016learning,huang2019deep,wang2017learning}. 
Re-weighting methods tend to make the optimization of deep models difficult under extreme data imbalanced settings and large-scale scenarios~\citep{huang2016learning,huang2019deep}.
\citet{cui2019classbalancedloss} observe that re-weighting by inverse class frequency yields poor performance on frequent classes, and thus propose re-weighting by the inverse effective number of samples. 
This is the main prior work that we empirically compare with. 

Another line of work assigns weights to each sample based on their individual properties. Focal loss \citep{lin2017focal} down-weights the well-classified examples; \citet{li2018gradient} suggests an improved technique which down-weights examples with either very small gradients or large gradients because examples with small gradients are well-classified and those with large gradients tend to be outliers.

In a recent work~\citep{byrd2019what}, Byrd and Lipton study the effect of importance weighting and show that empirically importance weighting does not have a significant effect when no regularization is applied, which is consistent with the theoretical prediction in~\citep{soudry2018implicit} that logistical regression without regularization converges to the max margin solution. In our work, we explicitly encourage rare classes to have higher margin, and therefore we don't converge to a max margin solution. Moreover, in our experiments, we apply non-trivial $\ell_2$-regularization to achieve the best generalization performance. We also found deferred re-weighting (or deferred re-sampling) are more effective than re-weighting and re-sampling from the beginning of the training. 

In contrast, and orthogonally to these papers above, our main technique aims to improve the generalization of the minority classes by applying additional regularization that is orthogonal to the re-weighting scheme. We also propose a deferred re-balancing optimization procedure to improve the optimization and generalization of a generic re-weighting scheme. %

\textbf{Margin loss.} The hinge loss is often used to obtain a ``max-margin'' classifier, most notably in SVMs~\citep{suykens1999least}. Recently, Large-Margin Softmax~\citep{liu2016large}, Angular Softmax~\citep{liu2017sphereface}, and Additive Margin Softmax~\citep{wang2018additive} have been proposed to minimize intra-class variation in predictions and enlarge the inter-class margin by incorporating the idea of angular margin. In contrast to the class-independent margins in these papers, our approach encourages bigger margins for minority classes. Uneven margins for imbalanced datasets are also proposed and studied in~\citep{li2002perceptron} and the recent work~\citep{khan2019striking,li2019overfitting}. Our theory put this idea on a more theoretical footing by providing a concrete formula for the desired margins of the classes alongside good empirical progress.

\textbf{Label shift in domain adaptation.} The problem of learning imbalanced datasets can be also viewed as a label shift problem in transfer learning or domain adaptation (for which we refer the readers to the survey~\citep{wang2018deep} and the reference therein).  In a typical label shift formulation, the difficulty is to detect and estimate the label shift, and after estimating the label shift, re-weighting or re-sampling is applied. We are addressing a largely different question: can we do better than re-weighting or re-sampling when the label shift is known?  In fact, our algorithms can be used to replace the re-weighting steps of some of the recent interesting work on detecting and correcting label shift~\citep{lipton2018detecting, azizzadenesheli2018regularized}. 

Distributionally robust optimization (DRO) is another technique for domain adaptation (see~\citep{duchidistributionally,hashimoto2018fairness,carmon2019variance} and the reference therein.) However, the formulation assumes no knowledge of the target label distribution beyond a bound on the amount of shift, which makes the problem very challenging. We here assume the knowledge of the test label distribution, using which we design efficient methods that can scale easily to large-scale vision datasets with significant improvements. 

\noindent{\bf Meta-learning.} Meta-learning is also used in improving the performance on imbalanced datasets or the few shot learning settings. We refer the readers to~\citep{wang2017learning,shu2019meta,wang2018low} and the references therein. So far, we generally believe that our approaches that modify the losses are more computationally efficient than meta-learning based approaches.

\section{Main Approach}

\newcommand{\R}{\mathbb{R}}
\renewcommand{\P}{\mathcal{P}}
\newcommand{\pdata}{\P}
\newcommand{\E}{\mathbb{E}}
\newcommand{\Lerror}{L_{\textup{bal}}}

\newcommand{\comp}{\textup{C}}
\newcommand{\cF}{\mathcal{F}}
\subsection{Theoretical Motivations}\label{sec:theory}

\paragraph{Problem setup and notations.}
We assume the input space is $\R^d$ and the label space is $\{1,\dots, k\}$. Let $x$ denote the input and $y$ denote the corresponding label. We assume that the class-conditional distribution $\P(x\mid y)$ is the same at training and test time. Let $\P_j$ denote the class-conditional distribution, i.e. $\P_j = \P(x \mid y = j)$. We will use $\P_{\textup{bal}}$ to denote the balanced test distribution which first samples a class uniformly and then samples data from $\P_j$. 

For a model $f:\R^d \rightarrow \R^k$ that outputs $k$ logits, we use $\Lerror[f]$ to denote the standard 0-1 test error on the balanced data distribution:
\begin{align*}
\Lerror[f] = \Pr_{(x,y)\sim \P_{\textup{bal}}} [f(x)_y < \max_{\ell\ne y} f(x)_\ell]
\end{align*}

Similarly, the error $L_j$ for class $j$ is defined as $L_j[f] = \Pr_{(x,y)\sim \P_{j}} [f(x)_y < \max_{\ell\ne y} f(x)_\ell]$. Suppose we have a training dataset $\{(x_i, y_i)\}_{i=1}^n$.  Let $n_j$ be the number of examples in class $j$. Let $S_j = \{i : y_i = j\}$ denote the example indices corresponding to class $j$.

Define the margin of an example $(x,y)$ as 
\begin{align}
\gamma(x,y) = f(x)_y - \max_{j\neq y} f(x)_j
\end{align}

Define the training margin for class $j$ as:
\begin{align}
\gamma_j = \min_{i \in S_j }\gamma(x_i, y_i) 
\end{align}
We  consider the separable cases (meaning that all the training examples are classified correctly) because neural networks are often over-parameterized and can fit the training data well. 
We also note that the minimum margin of all the classes,  $\gamma_{\min} = \min\{\gamma_1,\dots, \gamma_k\}$, is the classical notion of training margin studied in the past~\citep{koltchinskii2002empirical}.

\paragraph{Fine-grained generalization error bounds.}
Let $\cF$ be the family of hypothesis class. Let $\comp(\cF)$ be some proper complexity measure of the hypothesis class $\cF$. There is a large body of recent work on measuring the complexity of neural networks (see~\citep{bartlett2017spectrally,golowich2017size,wei2019data} and references therein), and our discussion below is orthogonal to the precise choices. When the training distribution and the test distribution are the same, the typical generalization error bounds scale in $\comp(\cF)/\sqrt{n}$. That is, in our case, if the test distribution is also imbalanced as the training distribution, then %
\begin{align}
\textup{imbalanced test error} \lesssim \frac{1}{\gamma_{\min}} \sqrt{\frac{\comp(\cF)}{n}}
\end{align} 
Note that the bound is oblivious to the label distribution, and only involves the minimum margin across all examples and the total number of data points. We extend such bounds to the setting with balanced test distribution by considering the margin of each class. As we will see, the more fine-grained bound below allows us to design new training loss function that is customized to the imbalanced dataset. 

\begin{theorem}[Informal and simplified version of Theorem~\ref{thm:technical_main}]\label{thm:informal_main}
	With high probability ($1-n^{-5}$) over the randomness of the training data, the error $L_j$ for class $j$ is bounded by 
	\begin{align}
	L_j[f] \lesssim \frac{1}{\gamma_j}  \sqrt{\frac{\comp(\cF) }{n_j}} + \frac{\log n}{\sqrt{n_j}} \label{eqn:perclass}
	\end{align}
	where we use $\lesssim$ to hide constant factors. As a direct consequence, 
	\begin{align}
	\Lerror[f] \lesssim \frac{1}{k}\sum_{j= 1}^k \left(\frac{1}{\gamma_j}  \sqrt{\frac{\comp(\cF) }{n_j}} + \frac{\log n}{\sqrt{n_j}}\right)\label{eqn:bound}
	\end{align}
\end{theorem}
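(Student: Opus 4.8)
The plan is to prove the per-class bound \eqref{eqn:perclass} first and then obtain \eqref{eqn:bound} as an immediate corollary. The key structural observation is that $\P_{\textup{bal}}$ first draws the label uniformly and then draws a point from $\P_j$, so the balanced error decomposes \emph{exactly} as $\Lerror[f] = \frac{1}{k}\sum_{j=1}^k L_j[f]$. Hence, once \eqref{eqn:perclass} holds simultaneously for every class $j$ (arranged via a union bound), averaging the $k$ per-class inequalities immediately yields \eqref{eqn:bound}. In other words, the entire content of the theorem lives in the single-class bound, and the outer aggregation is trivial.

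To bound $L_j[f]$, I would treat the $n_j$ examples in $S_j$ as i.i.d.\ draws from $\P_j$ and invoke a standard margin-based generalization bound specialized to this subsample and to the test distribution $\P_j$. For a fixed threshold $\gamma>0$, introduce the ramp (soft-margin) surrogate that equals $1$ when the margin $\gamma(x,y)$ is below $0$, equals $0$ once the margin exceeds $\gamma$, and is $1/\gamma$-Lipschitz in between; the $0$-$1$ error $L_j$ is dominated by the population ramp loss, which in turn is controlled by the empirical ramp loss on $S_j$ plus a complexity term. Applying a contraction argument to strip off the $1/\gamma$-Lipschitz ramp, together with the usual reduction of the multiclass margin functional $(x,y)\mapsto f(x)_y-\max_{\ell\neq y}f(x)_\ell$ to the base class $\cF$, makes this term scale as $\frac{1}{\gamma}\sqrt{\comp(\cF)/n_j}$. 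In the separable regime the empirical ramp loss at threshold $\gamma=\gamma_j$ vanishes, since by definition every class-$j$ example has margin at least $\gamma_j$; this is exactly what produces the factor $1/\gamma_j$ in \eqref{eqn:perclass}.

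The main obstacle is that $\gamma_j$ is itself data-dependent, so I cannot simply fix $\gamma=\gamma_j$ in a bound proved for a single fixed $\gamma$. The standard remedy is to make the bound hold uniformly over all thresholds via a peeling argument: prove the fixed-$\gamma$ bound on a geometric grid (say $\gamma\in\{2^{-r}\}$ down to resolution $1/n$), union-bound over the grid, and then instantiate at the largest grid point below $\gamma_j$, where the empirical ramp loss is still zero and $1/\gamma$ is within a constant factor of $1/\gamma_j$. Combined with a union bound over the $k$ classes and the choice of failure probability $\delta=n^{-5}$, the $\sqrt{\log(1/\delta)/n_j}$ concentration contribution together with the discretization overhead collects into the additive, margin-independent lower-order term $\frac{\log n}{\sqrt{n_j}}$. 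I would carry out the steps in the order: (i) establish $\Lerror=\frac{1}{k}\sum_j L_j$; (ii) prove the fixed-$\gamma$, single-class ramp bound with complexity $\sqrt{\comp(\cF)/n_j}$; (iii) upgrade to uniform-over-$\gamma$ by peeling and plug in the grid point below $\gamma_j$ using separability; (iv) union-bound over classes with $\delta=n^{-5}$ and average to get \eqref{eqn:bound}. The only genuinely delicate points are controlling the complexity of the multiclass margin functional and the peeling overhead simultaneously; the remainder is bookkeeping.
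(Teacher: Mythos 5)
Your proposal is correct and takes essentially the same route as the paper: the paper likewise uses the exact decomposition $\Lerror[f] = \frac{1}{k}\sum_{j=1}^k L_j[f]$, applies a standard margin-based generalization bound (holding uniformly over all margins $\gamma>0$) to the $n_j$ i.i.d.\ draws from $\P_j$ for each class, instantiates it at the data-dependent class margin $\gamma_j$ where the empirical margin loss vanishes by separability, and then union-bounds over the $k$ classes and averages. The only difference is one of packaging: the paper invokes the uniform-over-$\gamma$ margin bound as a black box (Theorem 2 of Kakade et al.), whereas you re-derive it via the ramp surrogate, Lipschitz contraction, and the geometric-grid peeling argument --- which is precisely the mechanism behind the cited bound's $\log\log$ and low-order terms.
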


\paragraph{Class-distribution-aware margin trade-off.}
The generalization error bound~\eqref{eqn:perclass} for each class suggests that if we wish to improve the generalization of minority classes (those with small $n_j$'s), we should aim to enforce bigger margins $\gamma_j$'s for them. However, enforcing bigger margins for minority classes may hurt the margins of the frequent classes. What is the optimal trade-off between the margins of the classes? An answer for the general case may be difficult, but fortunately we can obtain the optimal trade-off for the binary classification problem. 

With $k=2$ classes, we aim to optimize the balanced generalization error bound provided in~\eqref{eqn:bound}, which can be simplified to (by removing the low order term $\frac{\log n}{\sqrt{n_j}}$ and the common factor $\comp(\cF)$)
\begin{align}
\frac{1}{\gamma_1\sqrt{n_1}} + \frac{1}{\gamma_2\sqrt{n_2}}\label{eqn:two-class}
\end{align}
At the first sight, because $\gamma_1$ and $\gamma_2$ are complicated functions of the weight matrices, it appears difficult to understand the optimal margins. However, we can figure out the relative scales between $\gamma_1$ and $\gamma_2$.  Suppose $\gamma_1,\gamma_2 > 0$ minimize the equation above, we observe that any $\gamma_1' = \gamma_1-\delta$ and $\gamma_2'=\gamma_2 + \delta$ (for $\delta \in (-\gamma_2, \gamma_1)$) can be realized by the same weight matrices with a shifted bias term (See Figure~\ref{fig:margin} for an illustration). Therefore, for $\gamma_1,\gamma_2$ to be optimal, they should satisfy
\begin{align}
\frac{1}{\gamma_1\sqrt{n_1}} + \frac{1}{\gamma_2\sqrt{n_2}} \ge \frac{1}{(\gamma_1-\delta)\sqrt{n_1}} + \frac{1}{(\gamma_2+\delta)\sqrt{n_2}}
\end{align} 
The equation above implies that 
\begin{align}
\gamma_1 = \frac{C}{n_1^{1/4}}, \textup{ and } \gamma_2 = \frac{C}{n_2^{1/4}}
\end{align}
for some constant $C$. Please see a detailed derivation in the Section~\ref{sec:missing_theory}. 

\noindent{\bf Fast rate vs slow rate, and the implication on the choice of margins.} The bound in Theorem~\ref{thm:informal_main} may not necessarily be tight. The generalization bounds that scale in $1/\sqrt{n}$ (or $1/\sqrt{n_i}$ here with imbalanced classes) are generally referred to the ``slow rate'' and those that scale in $1/n$ are referred to the ``fast rate''. With deep neural networks and when the model is sufficiently big enough, it is possible that some of these bounds can be improved to the fast rate. See~\citep{wei2019improved} for some recent development. In those cases, we can derive the optimal trade-off of the margin to be $n_i \propto n_i^{-1/3}$.

\subsection{Label-Distribution-Aware Margin Loss} \label{sec:loss}

Inspired by the trade-off between the class margins in Section~\ref{sec:theory} for two classes, we propose to enforce a class-dependent margin for multiple classes of the form 
\begin{align}
\gamma_j = \frac{C}{n_{j}^{1/4}} 
\end{align}

We will design a soft margin loss function to encourage the network to have the margins above. Let $(x,y)$ be an example and $f$ be a model. For simplicity, we use $z_j = f(x)_j$ to denote the $j$-th output of the model for the $j$-th class. 

The most natural choice  would be a multi-class extension of the hinge loss:
\begin{align}
\mathcal{L}_{\BMHinge{}}((x,y); f) & = \max( \max_{j \neq y} \{ z_j \} - z_{y} + \Delta_{y}, 0)\\ 
& \textup{where } \Delta_j = \frac{C}{n_{j}^{1/4}} \textup{ for } j \in \{1,\dots, k\}
\end{align}

Here $C$ is a hyper-parameter to be tuned. In order to tune the margin more easily, we effectively normalize the logits (the input to the loss function) by normalizing last hidden activation to $\ell_2$ norm 1, and normalizing the weight vectors of the last fully-connected layer to $\ell_2$ norm 1, following the previous work~\citep{wang2018additive}. Empirically, the non-smoothness of hinge loss may pose difficulties for optimization. The smooth relaxation of the hinge loss is the following cross-entropy loss with enforced margins: 

\begin{align}
\ldamL((x,y);f) & = -\log \frac{e^{z_{y}-\Delta_{y}}}{e^{z_{y}-\Delta_{y}} + \sum_{j \neq y} e^{z_j}} \\
& \textup{where } \Delta_j = \frac{C}{n_{j}^{1/4}} \textup{ for } j \in \{1,\dots, k\} \label{eqn:delta}
\end{align}

In the previous work~\citep{liu2016large,liu2017sphereface,wang2018additive} where the training set is usually balanced, the margin $\Delta_{y}$ is chosen to be a label independent constant $C$, whereas our margin depends on the label distribution. 

\textit{Remark:} Attentive readers may find the loss  $\ldamL$ somewhat reminiscent of the re-weighting because in the binary classification case --- where the model outputs a single real number which is passed through a sigmoid to be converted into a probability, ---  both the two approaches change the gradient of an example by a scalar factor. However, we remark two key differences: the scalar factor introduced by the re-weighting only depends on the class, whereas the scalar introduced by $\ldamL$ also depends on the output of the model; for multiclass classification problems, the proposed loss $\ldamL$ affects the gradient of the example in a more involved way than only introducing a scalar factor. Moreover, recent work has shown that, under separable assumptions, the logistical loss, with weak regularization~\citep{wei2018margin} or without regularization~\citep{soudry2018implicit}, gives the max margin solution, which is in turn not effected by any re-weighting by its definition. This further suggests that the loss $\ldamL$ and the re-weighting may complement each other, as we have seen in the experiments. (Re-weighting would affect the margin in the non-separable data case, which is left for future work.)

\subsection{Deferred Re-balancing Optimization Schedule} \label{sec:schedule}

Cost-sensitive re-weighting and re-sampling are two well-known and successful strategies to cope with imbalanced datasets because, in expectation, they effectively make the imbalanced training distribution closer to the uniform test distribution. The known issues with applying these techniques are (a) re-sampling the examples in minority classes often causes heavy over-fitting to the minority classes when the model is a deep neural network, as pointed out in prior work (e.g.,~\citep{cui2019classbalancedloss}), and (b) weighting up the minority classes' losses can cause difficulties and instability in optimization, especially when the classes are extremely imbalanced~\citep{cui2019classbalancedloss, huang2016learning}. In fact, ~\citet{cui2019classbalancedloss} develop a novel and sophisticated learning rate schedule to cope with the optimization difficulty.

We observe empirically that re-weighting and re-sampling are both inferior to the vanilla empirical risk minimization (ERM) algorithm (where all training examples have the same weight) before annealing the learning rate in the following sense. The features produced before annealing the learning rate by re-weighting and re-sampling are worse than those produced by ERM. (See Figure~\ref{fig:linear_finetune} for an ablation study of the feature quality performed by training linear classifiers on top of the features on a large balanced dataset.)

Inspired by this, we develop a deferred re-balancing training procedure (Algorithm~\ref{alg:final}), which first trains using vanilla ERM with the \lda{} loss before annealing the learning rate, and then deploys a re-weighted \lda{} loss with a smaller learning rate. Empirically, the first stage of training leads to a good initialization for the second stage of training with re-weighted losses. Because the loss is non-convex and the learning rate in the second stage is relatively small, the second stage does not move the weights very far. Interestingly, with our \lda{} loss and deferred re-balancing training, the vanilla re-weighting scheme (which re-weights by the inverse of the number of examples in each class) works as well as the re-weighting scheme introduced in prior work~\citep{cui2019classbalancedloss}. We also found that with our re-weighting scheme and LDAM, we are less sensitive to early stopping than~\citep{cui2019classbalancedloss}.

\newcommand{\cD}{\mathcal{D}}
\newcommand{\cB}{\mathcal{B}}

\begin{algorithm}[H]
\caption{Deferred Re-balancing Optimization with \lda{} Loss} \label{alg:final}
\begin{algorithmic}[1]

\Require Dataset $\mathcal{D} = \{(x_i,y_i)\}_{i=1}^n$. A parameterized model $f_\theta$

\State Initialize the model parameters $\theta$ randomly
\For {$t=1$ to $T_0$}
	\State  $\cB\leftarrow \text{SampleMiniBatch}(\mathcal{D}, m)$  \Comment{a mini-batch of $m$ examples}
    \State $\mathcal{L}(f_\theta) \leftarrow \frac{1}{m} \sum_{(x,y)\in \cB}\ldamL((x,y);f_\theta) $ 
    \State $f_\theta \leftarrow f_\theta - \alpha\nabla_\theta \mathcal{L}(f_\theta)$ \Comment one SGD step
    \State Optional: $\alpha \leftarrow  \alpha/\tau$  \Comment anneal learning rate by a factor $\tau$ if necessary
\EndFor

\State

\For {$t=T_0$ to $T$ }
	\State  $\cB \leftarrow \text{SampleMiniBatch}(\mathcal{D}, m)$  \Comment{A mini-batch of $m$ examples}
       \State $\mathcal{L}(f_\theta) \leftarrow \frac{1}{m}\sum_{(x,y)\in \cB} n_y^{-1}\cdot \ldamL((x,y);f_\theta) $ \Comment{standard re-weighting by frequency}
    \State $f_\theta \leftarrow f_\theta -\alpha \frac{1}{\sum_{(x,y)\in \cB} n_y^{-1}}\nabla_\theta \mathcal{L}(f_\theta)$ \Comment one SGD step with re-normalized learning rate
\EndFor
\end{algorithmic}
\end{algorithm}

\section{Experiments} \label{sec:experiments}

We evaluate our proposed algorithm on artificially created versions of IMDB review~\citep{maas2011learning}, CIFAR-10, CIFAR-100 \citep{krizhevsky2009learning}  and Tiny ImageNet \citep{russakovsky2015imagenet,tinyimagenet} with controllable degrees of data imbalance, as well as a real-world large-scale imbalanced dataset, iNaturalist 2018 \citep{van2018inaturalist}. Our core algorithm is developed using PyTorch \citep{paszke2017automatic}.
\paragraph{Baselines.}
We compare our methods with the standard training and several state-of-the-art techniques and their combinations that have been widely adopted to mitigate the issues with training on imbalanced datasets: (1) Empirical risk minimization (ERM) loss: all the examples have the same weights; by default, we use standard cross-entropy loss.  (2) Re-Weighting (RW): we re-weight each sample by the inverse of the sample size of its class, and then re-normalize to make the weights 1 on average in the mini-batch. (3) Re-Sampling (RS): each example is sampled with probability proportional to the inverse sample size of its class. (4) CB~\citep{cui2019classbalancedloss}: the examples are re-weighted or re-sampled according to the inverse of the effective number of samples in each class, defined as $(1-\beta^{n_i})/(1-\beta)$, instead of inverse class frequencies. This idea can be combined with either re-weighting or re-sampling. (5) Focal: we use the recently proposed focal loss~\citep{lin2017focal} as another baseline. (6) SGD schedule: by SGD, we refer to the standard schedule where the learning rates are decayed a constant factor at certain steps; we use a standard learning rate decay schedule.

\paragraph{Our proposed algorithm and variants.} We test combinations of the following techniques proposed by us. (1) \tstagew{} and \tstages{}: following the proposed training Algorithm~\ref{alg:final}, we use the standard \ERM{} optimization schedule until the last learning rate decay, and then apply re-weighting or re-sampling for optimization in the second stage. (2) \BMCE{}: the proposed Label-Distribution-Aware Margin losses as described in Section~\ref{sec:loss}.

When two of these methods can be combined, we will concatenate the acronyms with a dash in between as an abbreviation. The main algorithm we propose is \BMCE{}-\tstagew{}. Please refer to Section~\ref{sec:implementation} for additional implementation details. 

\subsection{Experimental results on IMDB review dataset}\label{sec:IMDB}

IMDB review dataset consists of 50,000 movie reviews for binary sentiment classification~\citep{maas2011learning}. The original dataset contains an evenly distributed number of positive and negative reviews. We manually created an imbalanced training set by removing 90\% of negative reviews. We train a two-layer bidirectional LSTM with Adam optimizer~\citep{kingma2014adam}. The results are reported in Table~\ref{tab:lmdb-table}.

\begin{table}[]
	\caption{Top-1 validation errors on imbalanced IMDB review dataset. Our proposed approach LDAM-DRW outperforms the baselines.}
	\label{tab:lmdb-table}
	\centering
	\begin{tabular}{c|ccc}
		\toprule
		Approach & Error on positive reviews&  Error on negative reviews& Mean Error \\
		\midrule
		ERM & 2.86 & 70.78 & 36.82 \\
		RS & 7.12 & 45.88 & 26.50 \\
		RW & 5.20 & 42.12 & 23.66 \\
		LDAM-DRW & 4.91 & 30.77 & 17.84 \\
		\bottomrule
	\end{tabular}
\end{table}

\subsection{Experimental results on CIFAR}\label{sec:cifar}

\begin{table}[]
	\caption{Top-1 validation errors of ResNet-32 on imbalanced CIFAR-10 and CIFAR-100. The combination of our two techniques, \BMCE{}-\tstagew{}, achieves the best performance, and each of them individually are beneficial when combined with other losses or schedules.}
	\label{tab:margin-table}
	\centering
	\begin{tabular}{c|cc|cc|cc|cc}
		\toprule
		Dataset           & \multicolumn{4}{c|}{Imbalanced CIFAR-10}                               & \multicolumn{4}{c}{Imbalanced CIFAR-100}                              \\ \midrule
		Imbalance Type         & \multicolumn{2}{c|}{long-tailed} & \multicolumn{2}{c|}{step} & \multicolumn{2}{c|}{long-tailed} & \multicolumn{2}{c}{step} \\ \midrule
		Imbalance Ratio        & \multicolumn{1}{c|}{100}  & 10 & \multicolumn{1}{c|}{100}   & 10   & \multicolumn{1}{c|}{100}  & 10 & \multicolumn{1}{c|}{100}   & 10   \\ \midrule
		\ERM{} & 29.64& 13.61& 36.70 & 17.50 & 61.68& 44.30 & 61.45& 45.37 \\
		Focal~\citep{lin2017focal} & 29.62 & 13.34 & 36.09 & 16.36 & 61.59 & 44.22 &  61.43                 & 46.54 \\
		\BMCE{} & 26.65& 13.04& 33.42& 15.00  & 60.40 & 43.09& 60.42& 43.73 \\ 
		\midrule \midrule
		CB RS &  29.45  &  13.21 & 38.14 & 15.41  & 66.56& 44.94  & 66.23& 46.92 \\
	    CB RW~\citep{cui2019classbalancedloss} & 27.63& 13.46& 38.06 & 16.20 & 66.01& 42.88 & 78.69 & 47.52 \\
		CB Focal~\citep{cui2019classbalancedloss} & 25.43 & 12.90 & 39.73 & 16.54 & 63.98 & 42.01 & 80.24 & 49.98 \\
		\midrule
		HG-\tstages{}&  27.16 & 14.03 &   29.93  & 14.85 & - & - & - & - \\
		\BMHinge{}-\tstages{}& 24.42  & 12.72 & 24.53  & 12.82 & - & - & - & - \\
		M-\tstagew{}& 24.94 & 13.57 & 27.67 & 13.17 & 59.49  & 43.78 & 58.91 & 44.72\\
		\textbf{\BMCE{}-\tstagew{}}& \textbf{22.97}& \textbf{11.84}& \textbf{23.08}& \textbf{12.19}& \textbf{57.96}& \textbf{41.29}& \textbf{54.64}& \textbf{40.54} \\
		\bottomrule
	\end{tabular}
\end{table}

\textbf{Imbalanced CIFAR-10 and CIFAR-100.} The original version of CIFAR-10 and CIFAR-100 contains 50,000 training images and 10,000 validation images of size $32\times 32$ with 10 and 100 classes, respectively. To create their imbalanced version, we reduce the number of training examples per class and keep the validation set unchanged. To ensure that our methods apply to a variety of settings, we consider two types of imbalance: long-tailed imbalance \citep{cui2019classbalancedloss} and step imbalance \citep{buda2018systematic}. We use imbalance ratio $\rho$ to denote the ratio between sample sizes of the most frequent and least frequent class, i.e., $\rho = \max_i \{n_i\} / \min_i \{n_i\}$. Long-tailed imbalance follows an exponential decay in sample sizes across different classes. For step imbalance setting, all minority classes have the same sample size, as do all frequent classes. This gives a clear distinction between minority classes and frequent classes, which is particularly useful for ablation study. We further define the fraction of minority classes as $\mu$. By default we set $\mu = 0.5$ for all experiments. 

We report the top-1 validation error of various methods for imbalanced versions of CIFAR-10 and CIFAR-100 in Table~\ref{tab:margin-table}. Our proposed approach is \ldam{}-\tstagew{}, but we also include a various combination of our two techniques with other losses and training schedule for our ablation study. 

We first show that the proposed label-distribution-aware margin cross-entropy loss is superior to pure cross-entropy loss and one of its variants tailored for imbalanced data, focal loss, while no data-rebalance learning schedule is applied. We also demonstrate that our full pipeline outperforms the previous state-of-the-arts by a large margin. 
To further demonstrate that the proposed LDAM loss is essential, we compare it with regularizing by a uniform margin across all classes under the setting of cross-entropy loss and hinge loss. We use M-DRW to denote the algorithm that uses a cross-entropy loss with uniform margin~\citep{wang2018additive} to replace \ldam, namely, the $\Delta_j$ in equation~\eqref{eqn:delta} is chosen to be a tuned constant that does not depend on the class $j$. Hinge loss (HG) suffers from optimization issues with 100 classes so we constrain its experiment setting with CIFAR-10 only.

\textbf{Imbalanced but known test label distribution: } We also test the performance of an extension of our algorithm in the setting where the test label distribution is known but not uniform. Please see Section~\ref{sec:imbalanced_test} for details.

\subsection{Visual recognition on iNaturalist 2018 and imbalanced Tiny ImageNet}

\begin{table}[]
	\caption{Validation errors on iNaturalist 2018 of various approaches. Our proposed method \ldam{}-\tstagew{} demonstrates significant improvements over the previous state-of-the-arts. We include ERM-\tstagew{} and \BMCE{}-SGD for the ablation study.}
	\label{tab:inat-table}
	\centering
	\begin{tabular}{cc|cc}
		\toprule
		Loss       & Schedule & Top-1 & Top-5 \\
		\midrule
ERM & SGD        & 42.86 & 21.31 \\
		CB Focal~\citep{cui2019classbalancedloss} & SGD       & 38.88 & 18.97 \\
ERM & \tstagew{} & 36.27 & 16.55 \\
		\BMCE{} & SGD & 35.42 & 16.48 \\
	    \textbf{\BMCE{}} & \textbf{\tstagew{}} & \textbf{32.00} & \textbf{14.82} \\
		\bottomrule
	\end{tabular}
\end{table}

We further verify the effectiveness of our method on large-scale imbalanced datasets. The iNatualist species classification and detection dataset \citep{van2018inaturalist} is a real-world large-scale imbalanced dataset which has 437,513 training images with a total of 8,142 classes in its 2018 version. We adopt the official training and validation splits for our experiments. The training datasets have a long-tailed label distribution and the validation set is designed to have a balanced label distribution. We use ResNet-50 as the backbone network across all experiments for iNaturalist 2018. Table~\ref{tab:inat-table} summarizes top-1 validation error for iNaturalist 2018. Notably, our full pipeline is able to outperform the ERM baseline by 10.86\% and previous state-of-the-art by 6.88\% in top-1 error. Please refer to Appendix~\ref{sec:tinyimagenet} for results on imbalanced Tiny ImageNet.

\subsection{Ablation study} \label{sec:ablation}

\begin{figure}
\begin{minipage}[b]{0.48\linewidth}
\centering
\includegraphics[width=\textwidth]{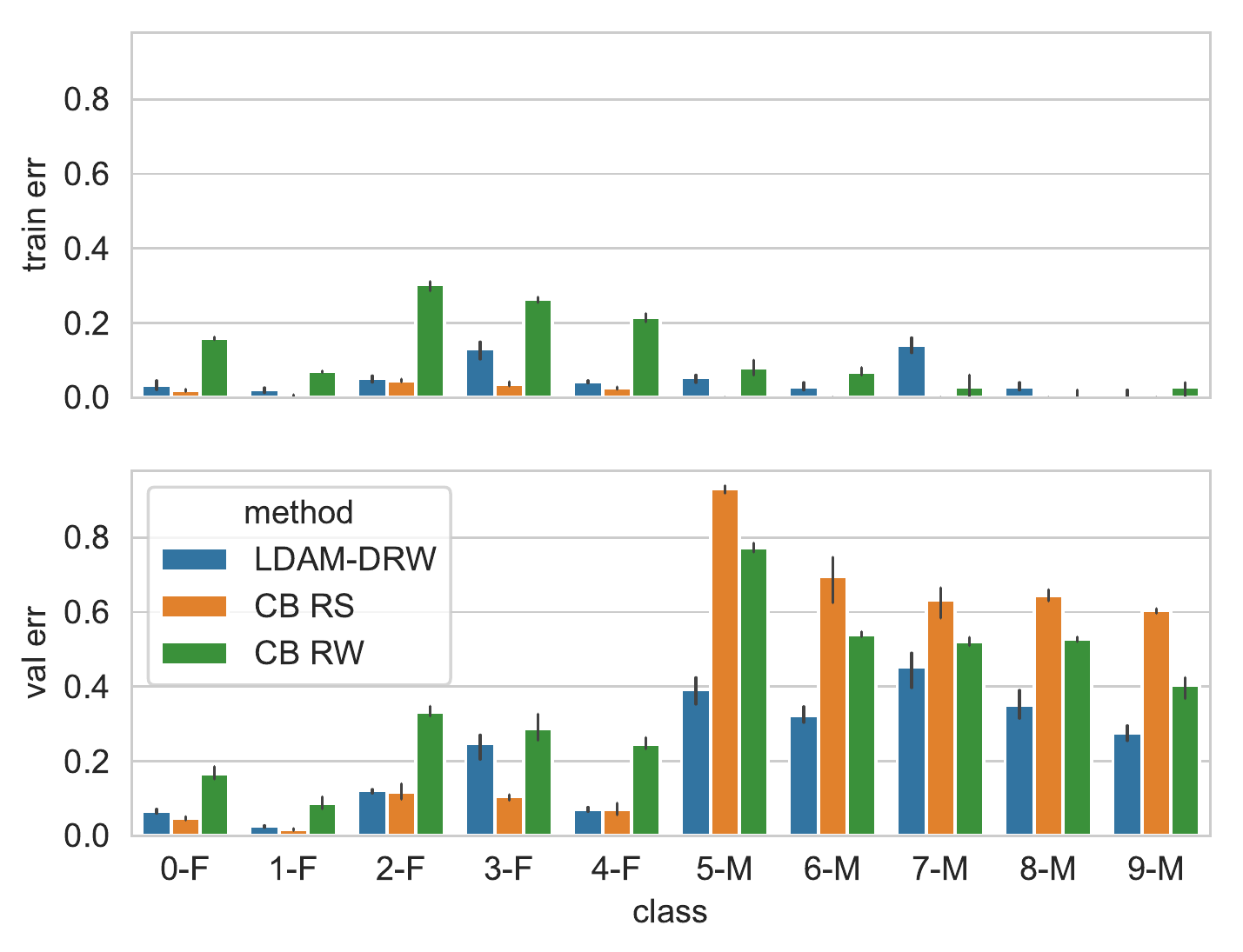}
\caption{Per-class top-1 error on CIFAR-10 with step imbalance ($\rho=100, \mu=0.5$). Classes 0-F to 4-F are frequent classes, and the rest are minority classes. Under this extremely imbalanced setting RW suffers from under-fitting, while RS over-fits on minority examples. On the contrary, the proposed algorithm exhibits great generalization on minority classes while keeping the performance on frequent classes almost unaffected. This suggests we succeeded in regularizing minority classes more strongly.}
\label{fig:per_cls_acc}
\end{minipage}
\hspace{0.5cm}
\begin{minipage}[b]{0.48\linewidth}
\centering
\includegraphics[width=\textwidth]{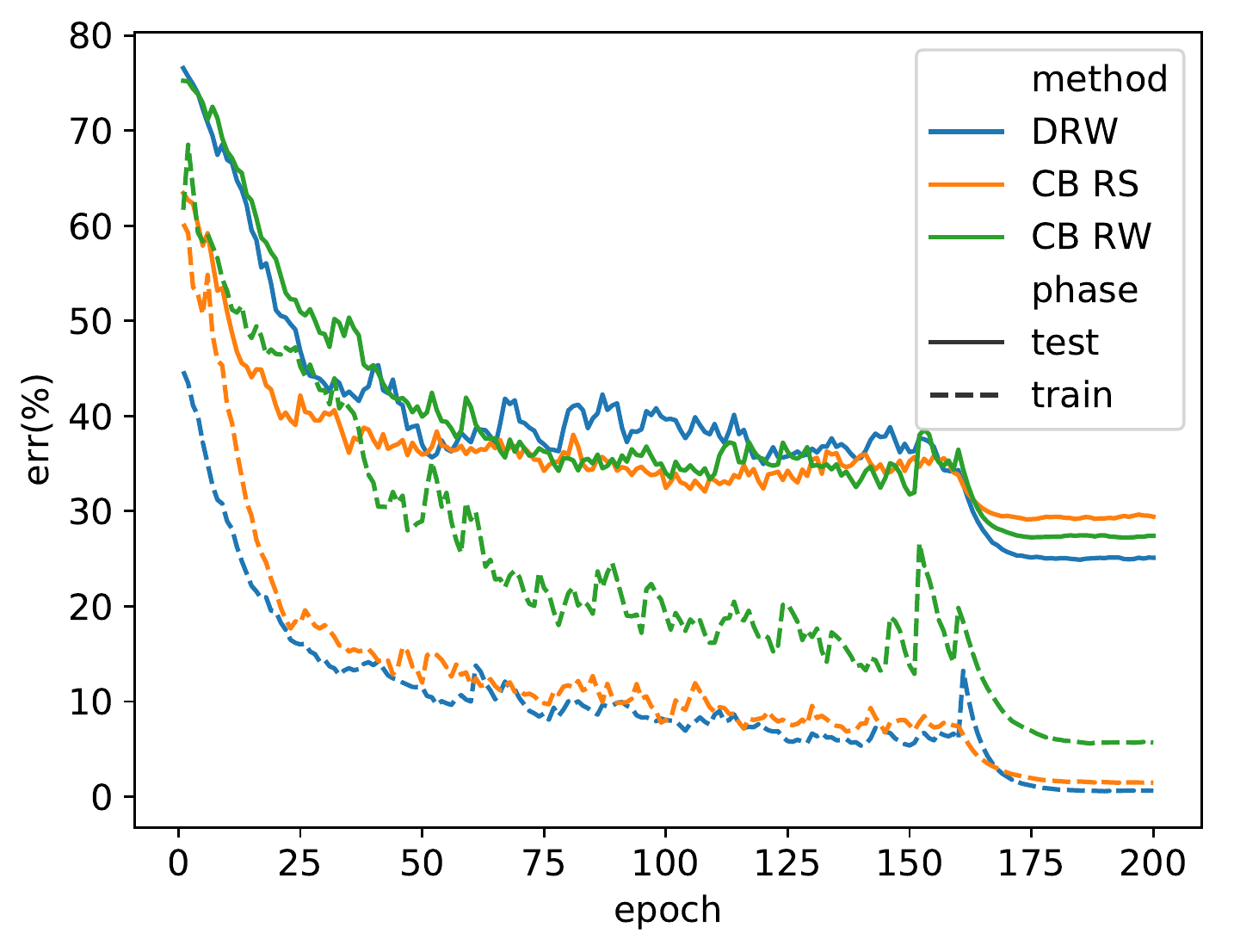}
\caption{Imbalanced training errors (dotted lines) and \textit{balanced} test errors (solid lines) on CIFAR-10 with long-tailed imbalance ($\rho=100$). We anneal decay the learning rate at epoch 160 for all algorithms. Our \tstagew{} schedule uses ERM before annealing the learning rate and thus performs worse than RW and RS before that point, as expected. However, it outperforms the others significantly after annealing the learning rate. See Section~\ref{sec:ablation} for more analysis. {\color{white} xxxxxxxxxxxxxxxxxxxxxxxxxxxxxxxxxxxxx}}
\label{fig:exp_plot}
\end{minipage}
\end{figure}

\textbf{Evaluating generalization on minority classes.} To better understand the improvement of our algorithms, we show per-class errors of different methods in Figure~\ref{fig:per_cls_acc} on imbalanced CIFAR-10. Please see the caption there for discussions.%

\textbf{Evaluating deferred re-balancing schedule.} We compare the learning curves of deferred re-balancing schedule with other baselines in Figure~\ref{fig:exp_plot}. In Figure~\ref{fig:linear_finetune} of Section~\ref{sec:feature_extractor}, we further show that even though ERM in the first stage has slightly worse or comparable balanced test error compared to RW and RS, in fact the features (the last-but-one layer activations) learned by ERM are better than those by RW and RS. This agrees with our intuition that the second stage of \tstagew{}, starting from better features, adjusts the decision boundary and locally fine-tunes the features.%

\section{Conclusion}

We propose two methods for training on imbalanced datasets, label-distribution-aware margin loss (LDAM), and a deferred re-weighting (DRW) training schedule. Our methods achieve significantly improved performance on a variety of benchmark vision tasks. Furthermore, we provide a theoretically-principled justification of LDAM by showing that it optimizes a uniform-label generalization error bound. For DRW, we believe that deferring re-weighting lets the model avoid the drawbacks associated with re-weighting or re-sampling until after it learns a good initial representation (see some analysis in Figure~\ref{fig:exp_plot} and Figure~\ref{fig:linear_finetune}). However, the precise explanation for DRW's success is not fully theoretically clear, and we leave this as a direction for future work. 

\paragraph{Acknowledgements}

Toyota Research Institute ("TRI") provided funds and computational resources to assist the authors with their research but this article solely reflects the opinions and conclusions of its authors and not TRI or any other Toyota entity. We  thank Percy Liang and Michael Xie for helpful discussions in various stages of this work.

\medskip

{\small
\bibliographystyle{plainnat}
\bibliography{ref}}

\newpage
\appendix
\section{Missing Proofs and Derivations in Section~\ref{sec:theory}}
\label{sec:missing_theory}
\newcommand{\mrad}{\mathfrak{R}}
\newcommand{\X}{\mathcal{X}}
Let $L_{\gamma, j}$ denote the hard margin loss on examples from class $j$:
\begin{align*}
L_{\gamma,j}[f] = \Pr_{x \sim \P_j} [\max_{j'\ne j} f(x)_{j'} > f(x)_j- \gamma]
\end{align*}
and let $\hat{L}_{\gamma, j}$ denote its empirical variant. For a hypothesis class $\mathcal{F}$, let $\hat{\mathfrak{R}}_j(\mathcal{F})$ denote the empirical Rademacher complexity of its class $j$ margin: 
\begin{align*}
\hat{\mathfrak{R}}_j(\mathcal{F}) = \frac{1}{n_j}\E_{\sigma}\left[\sup_{f \in\mathcal{F}}\sum_{i\in S_j}\sigma_{i} [f(x_i)_j - \max_{j' \ne j} f(x_i)_{j'}]\right]
\end{align*}
where $\sigma$ is a vector of i.i.d. uniform $\{-1, +1\}$ bits. The following formal versiom of Theorem~\ref{thm:informal_main} bounds the balanced-class generalization $\P_{\textup{bal}}$ using samples from $\P$.
\begin{theorem}\label{thm:technical_main}
	With probability $1 - \delta$ over the randomness of the training data, for all choices of class-dependent margins $\gamma_1,\ldots, \gamma_k > 0$, all hypotheses $f \in \mathcal{F}$ will have balanced-class generalization bounded by 
	\begin{align*}
	\Lerror[f] \le \frac{1}{k} \left(\sum_{j = 1}^k \hat{L}_{\gamma_j, j}[f] + \frac{4}{\gamma_j} \hat{\mrad}_j(\mathcal{F})+ \epsilon_j(\gamma_j)\right)
	\end{align*}
	where $\epsilon_j(\gamma) \triangleq \sqrt{\frac{\log \log_2 (\frac{2 \max_{x \in\X,f \in\cF} |f(x)|}{\gamma}) + \log \frac{2c}{\delta}}{n_j}}$ is typically a low-order term in $n_j$. Concretely, the Rademacher complexity $\hat{\mathfrak{R}}_{j}(\mathcal{F})$ will typically scale as $\sqrt{\frac{\comp(\cF)}{n_j}}$ for some complexity measure $\comp(\cF)$, in which case 
	\begin{align*}
	\Lerror[f] \le \frac{1}{k} \left(\sum_{j= 1}^k \hat{L}_{\gamma_j, j}[f] + \frac{4}{\gamma_j} \sqrt{\frac{\comp(\cF)}{n_j}} + \epsilon_j(\gamma_j)\right)
	\end{align*}
\end{theorem}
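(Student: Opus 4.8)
The plan is to reduce the balanced test error to an average of per-class error terms and then apply a standard margin-based uniform-convergence bound to each class separately, with the extra ingredient that the bound must hold simultaneously over all choices of the margins $\gamma_j$. First I would use that $\P_{\textup{bal}}$ draws a class uniformly and then a point from the corresponding $\P_j$, so the balanced error decomposes exactly as $\Lerror[f] = \frac1k\sum_{j=1}^k L_j[f]$. It therefore suffices to prove, for each fixed class $j$, a high-probability bound $L_j[f] \le \hat L_{\gamma_j,j}[f] + \frac{4}{\gamma_j}\hat{\mrad}_j(\cF) + \epsilon_j(\gamma_j)$ holding uniformly over $f\in\cF$ and $\gamma_j>0$, then union-bound over the $k$ classes (allocating failure probability $\delta/k$ to each) and average.

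Fix a class $j$ and write $\rho_f(x) = f(x)_j - \max_{j'\ne j} f(x)_{j'}$ for the signed class-$j$ margin, so that $L_j[f] = \Pr_{x\sim\P_j}[\rho_f(x)\le 0]$ and $L_{\gamma,j}[f] = \Pr_{x\sim\P_j}[\rho_f(x) < \gamma]$. For a fixed $\gamma>0$ I would introduce the ramp surrogate $\phi_\gamma$ that equals $1$ on $(-\infty,0]$, decreases linearly to $0$ on $[0,\gamma]$, and equals $0$ on $[\gamma,\infty)$; it is $1/\gamma$-Lipschitz and satisfies $\mathbf{1}[t\le 0]\le \phi_\gamma(t)\le \mathbf{1}[t<\gamma]$. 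The chain then runs $L_j[f] = \E_{\P_j}[\mathbf{1}[\rho_f\le 0]] \le \E_{\P_j}[\phi_\gamma(\rho_f)]$, followed by the standard symmetrization/Rademacher uniform-convergence inequality applied to the bounded class $\{\phi_\gamma\circ\rho_f : f\in\cF\}$ on the $n_j$ class-$j$ samples, and finally the upper sandwich $\hat\E[\phi_\gamma(\rho_f)]\le \hat L_{\gamma,j}[f]$, where $\hat\E$ is the empirical average over the class-$j$ points. Since $\hat{\mrad}_j(\cF)$ is defined to be precisely the empirical Rademacher complexity of $\{\rho_f : f\in\cF\}$, the Ledoux--Talagrand contraction lemma applies cleanly (the inner $\max$ is harmless, as $\rho_f$ is the base function and $\phi_\gamma$ the $1/\gamma$-Lipschitz scalar map), giving $\hat{\mrad}_j(\phi_\gamma\circ\cF)\le\frac1\gamma\hat{\mrad}_j(\cF)$; a McDiarmid step to pass from expected to empirical Rademacher complexity contributes a $\sqrt{\log(1/\delta)/n_j}$ concentration term.

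The main obstacle, and the source of the $\log\log_2$ factor in $\epsilon_j$, is upgrading this fixed-$\gamma$ statement to one holding for all $\gamma_j>0$ simultaneously. I would handle this by a peeling/union-bound argument over a geometric grid of candidate margins, say $\gamma\in\{2^{-\ell}\cdot 2\max_{x,f}|f(x)|\}_{\ell\ge 0}$, which covers the meaningful range $(0,\,2\max_{x,f}|f(x)|]$ outside of which the margin loss is trivially $1$. Assigning the $\ell$-th grid point a failure probability proportional to $\delta/(k\,\ell^2)$ keeps the total class-$j$ failure probability at most $\delta/k$ while injecting only a $\log\ell \asymp \log\log_2(2\max_{x,f}|f(x)|/\gamma)$ term, exactly the numerator inside $\epsilon_j$. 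Rounding an arbitrary $\gamma$ down to its nearest grid point at most doubles $1/\gamma$, which is what promotes the complexity coefficient from $2$ (from symmetrization and contraction) to $4$.

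Finally I would collect the per-class inequalities, take a union bound over the $k$ classes, sum them, and divide by $k$ to obtain the first display. Specializing the generic bound $\hat{\mrad}_j(\cF)\lesssim\sqrt{\comp(\cF)/n_j}$ then yields the second display. The routine parts here are the symmetrization, contraction, and McDiarmid concentration; the delicate bookkeeping is entirely in the uniform-over-$\gamma$ discretization, where one must balance the grid resolution against the union-bound cost so that the resulting slack is only the low-order $\epsilon_j(\gamma_j)$.
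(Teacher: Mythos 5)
Your proposal is correct and takes essentially the same route as the paper: decompose the balanced error as $\Lerror[f] = \frac{1}{k}\sum_{j=1}^k L_j[f]$, apply a margin-based bound holding uniformly over $f \in \cF$ and $\gamma_j > 0$ to the $n_j$ i.i.d.\ class-$j$ samples drawn from $\P_j$, then union-bound over the $k$ classes and average. The only difference is that the paper invokes the per-class margin bound as a black box (Theorem~2 of Kakade et al., 2009), whereas you re-derive it from first principles via the ramp surrogate, Ledoux--Talagrand contraction, and geometric peeling over margins --- a faithful unpacking of exactly that cited result, including the provenance of the $\log\log_2$ term and the constant $4$.
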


\begin{proof}
	
	We will prove generalization separately for each class $j$ and then union bound over all classes. 
	
	Let $L_j[f]$ denote the test $0-1$ error of classifier $f$ on examples drawn from $\P_j$. As the examples for class $j$ is a set of $n_j$ i.i.d. draws from the conditional distribution $\P_j$, we can apply the standard margin-based generalization bound (Theorem 2 of~\citep{kakade2009complexity}) to obtain with probability $1 - \delta/c$, for all choices of $\gamma_j > 0$ and $f \in \mathcal{F}$, 
	\begin{align}
	L_{j}[f] \le \hat{L}_{\gamma_j, j} + \frac{4}{\gamma_j} \hat{\mathfrak{R}}_{j}(\mathcal{F}) + \sqrt{\frac{\log \log_2(\frac{2\max_{x \in \mathcal{X},f \in\mathcal{F}}|f(x)|}{\gamma_j})}{n_j}} + \sqrt{\frac{\log \frac{2c}{\delta}}{n_j}} \label{eq:balanced_gen:1}
	\end{align}
	Now since
	$\Lerror = \frac{1}{k} \sum_{j= 1}^k L_j$, we can union bound over all classes and average~\eqref{eq:balanced_gen:1}~to get the desired result.
\end{proof}

\newcommand{\bias}{b}
\newcommand{\msum}{\beta}
We will now show that in the case of $k = 2$ classes, it is always possible to shift the margins in order to optimize the generalization bound of~Theorem~\ref{thm:technical_main}~by adding bias terms.
\begin{theorem}
	For binary classification, let $\mathcal{F}$ be a hypothesis class of neural networks with a bias term, i.e. $\mathcal{F} = \{f + b\}$ where $f$ is a neural net function and $b \in \R^2$ is a bias, with Rademacher complexity upper bound $\hat{\mrad}_j(\mathcal{F}) \le \sqrt{\frac{\comp(\cF)}{n_j}}$. Suppose some classifier $f \in \mathcal{F}$ can achieve a total sum of margins $\gamma_1' + \gamma_2' = \msum$ with $\gamma_1', \gamma_2' > 0$. Then there exists a classifier $f^\star \in \mathcal{F}$ with margins 
	\begin{align*}
		\gamma_1^\star = \frac{\msum n_2^{1/4}}{n_1^{1/4} + n_2^{1/4}} ~~, \gamma_2^\star = \frac{\msum n_1^{1/4}}{n_1^{1/4} + n_2^{1/4}}
	\end{align*}
	which with probability $1 - \delta$ obtains the optimal generalization guarantees for Theorem~\ref{thm:technical_main}:
	\begin{align*}
	\Lerror[f^\star] \le \min_{\gamma_1 + \gamma_2 = \beta} \left(\frac{2}{\gamma_1}\sqrt{\frac{\comp(\cF)}{n_1}} + \frac{2}{\gamma_2} \sqrt{\frac{\comp(\cF)}{n_2}} \right) + \epsilon(\gamma^\star_1) + \epsilon(\gamma^\star_2)
	\end{align*}
	where $\epsilon$ is defined in Theorem~\ref{thm:technical_main}. Furthermore, this $f^\star$ is obtained via $f + b^\star$ for some bias $b^\star$. 
\end{theorem}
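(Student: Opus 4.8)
The plan is to exploit the single degree of freedom that a bias term provides in binary classification: it slides the decision boundary and thereby trades one class margin against the other while keeping their sum fixed, after which I choose the split that minimizes the bound of Theorem~\ref{thm:technical_main}.

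First I would record how the two training margins transform under a bias shift. Writing $f^\star = f + \bias$ with $\bias = (\bias_1, \bias_2)$ and $\delta = \bias_1 - \bias_2$, the class-$1$ margin of any example becomes $(f(x)_1 + \bias_1) - (f(x)_2 + \bias_2) = \gamma(x,1) + \delta$, and symmetrically the class-$2$ margin becomes $\gamma(x,2) - \delta$. Taking minima over each class, the per-class training margins map as $\gamma_1 \mapsto \gamma_1 + \delta$ and $\gamma_2 \mapsto \gamma_2 - \delta$, so their sum $\gamma_1 + \gamma_2 = \msum$ is invariant. In particular, starting from the given $f$ with $\gamma_1' + \gamma_2' = \msum$, every split $(\gamma_1, \gamma_2)$ with $\gamma_1, \gamma_2 > 0$ and $\gamma_1 + \gamma_2 = \msum$ is realized by some bias, namely $\delta = \gamma_1 - \gamma_1'$, and the resulting $f + \bias$ still lies in $\cF$ by assumption.

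Next I would minimize the Rademacher part of the bound over the constraint $\gamma_1 + \gamma_2 = \msum$. Using $\hat{\mrad}_j(\cF) \le \sqrt{\comp(\cF)/n_j}$, the quantity to minimize is $\frac{2}{\gamma_1}\sqrt{\comp(\cF)/n_1} + \frac{2}{\gamma_2}\sqrt{\comp(\cF)/n_2}$. Substituting $\gamma_2 = \msum - \gamma_1$ and setting the derivative to zero gives the first-order condition $\frac{1}{\gamma_1^2\sqrt{n_1}} = \frac{1}{\gamma_2^2\sqrt{n_2}}$, i.e. $\gamma_1/\gamma_2 = n_2^{1/4}/n_1^{1/4}$; combined with $\gamma_1 + \gamma_2 = \msum$ this yields exactly $\gamma_1^\star = \msum n_2^{1/4}/(n_1^{1/4} + n_2^{1/4})$ and $\gamma_2^\star = \msum n_1^{1/4}/(n_1^{1/4} + n_2^{1/4})$, and convexity of $1/\gamma$ on $(0,\msum)$ certifies this critical point as the global minimizer. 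I would then set $\bias^\star$ to be the shift with $\delta^\star = \gamma_1^\star - \gamma_1'$ (e.g. $\bias^\star = (\delta^\star, 0)$), so that $f^\star = f + \bias^\star \in \cF$ has per-class training margins exactly $\gamma_1^\star, \gamma_2^\star$, both positive since $\gamma_1^\star, \gamma_2^\star \in (0,\msum)$. Because $\gamma_j^\star$ equals the minimum class-$j$ margin of $f^\star$, no class-$j$ example has margin strictly below $\gamma_j^\star$, so the empirical hard-margin losses vanish: $\hat{L}_{\gamma_1^\star, 1}[f^\star] = \hat{L}_{\gamma_2^\star, 2}[f^\star] = 0$. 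Plugging $\gamma_1^\star, \gamma_2^\star$ into Theorem~\ref{thm:technical_main} with $k = 2$, dropping the two zero loss terms, and using $\tfrac12(\epsilon(\gamma_1^\star) + \epsilon(\gamma_2^\star)) \le \epsilon(\gamma_1^\star) + \epsilon(\gamma_2^\star)$ then delivers precisely the claimed guarantee, with the Rademacher terms equal to the minimum from the previous step.

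The computations themselves are routine; the one point needing genuine care is that the margins $\gamma_1^\star, \gamma_2^\star$ are selected \emph{after} seeing the data (they depend on $\msum$, hence on $f$). This is legitimate only because Theorem~\ref{thm:technical_main} holds uniformly over all choices of $\gamma_1, \gamma_2 > 0$ simultaneously, so I may substitute the data-dependent optimal split without an extra union bound. A secondary subtlety is confirming that the bias shift does not inflate the complexity term, but this is already folded into the hypothesis $\hat{\mrad}_j(\cF) \le \sqrt{\comp(\cF)/n_j}$, a uniform bound over the entire class $\cF = \{f + \bias\}$ that therefore covers $f^\star$.
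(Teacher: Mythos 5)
Your proposal is correct and follows essentially the same route as the paper's own proof: realize the optimal margin split $(\gamma_1^\star,\gamma_2^\star)$ via a bias shift (only the difference $b_1^\star-b_2^\star$ matters, so your choice $(\delta^\star,0)$ and the paper's symmetric choice are equivalent), then apply Theorem~\ref{thm:technical_main} and verify optimality through the first-order condition on $\frac{1}{\gamma_1\sqrt{n_1}}+\frac{1}{\gamma_2\sqrt{n_2}}$ under $\gamma_1+\gamma_2=\msum$. If anything, you are more careful than the paper on points it leaves implicit --- the vanishing of the empirical hard-margin losses $\hat{L}_{\gamma_j^\star,j}[f^\star]$, the $\frac{1}{k}$ averaging factor, convexity certifying the global minimum, and the legitimacy of choosing data-dependent margins via the uniform-over-$\gamma$ statement of Theorem~\ref{thm:technical_main}.
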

\begin{proof}
	For our bias $b^\star$, we simply choose $b^\star_1 = (\gamma_1^\star - \gamma_1')/2$, $b^\star_2 = -(\gamma_1^\star - \gamma_1')/2$. Now note that adding a bias term simply shifts the margins for class 1 by $b^\star_1 - b^\star_2$, giving a new margin of $\gamma_2^\star$. Likewise, the margin for class 2 becomes 
	\begin{align*}
		b^\star_2 - b^\star_1 + \gamma_2' = \gamma_2' - \gamma_1^\star + \gamma_1' = \beta - \gamma_1^\star = \gamma_2^\star
	\end{align*} 
	Now we apply Theorem~\ref{thm:technical_main} to get with probability $1 - \delta$ the generalization error bound
	\begin{align*}
		\Lerror[f^\star] \le \frac{2}{\gamma_1^\star}\sqrt{\frac{\comp(\cF)}{n_1}} + \frac{2}{\gamma_2^\star} \sqrt{\frac{\comp(\cF)}{n_2}} + \epsilon(\gamma^\star_1) + \epsilon(\gamma^\star_2)
	\end{align*}
	To see that $\gamma_1^\star, \gamma_2^\star$ indeed solve 
	\begin{align*}
		\min_{\gamma_1 + \gamma_2 = \beta} \frac{1}{\gamma_1} \sqrt{\frac{1}{n_1}} + \frac{1}{\gamma_2}\sqrt{\frac{1}{n_2}}
	\end{align*}
	we can substitute $\gamma_2 = \beta - \gamma_1$ into the expression and set the derivative to 0, obtaining
	\begin{align*}
		\frac{1}{(\beta- \gamma_1)^2 \sqrt{n_2}}-\frac{1}{\gamma_1^2\sqrt{n_1}} = 0
		\end{align*}
		Solving gives $\gamma_1^\star$. 
\end{proof}

\section{Implementation details} \label{sec:implementation}

\begin{figure}
     \centering
     \begin{subfigure}[b]{0.32\textwidth}
         \centering
         \includegraphics[width=\textwidth]{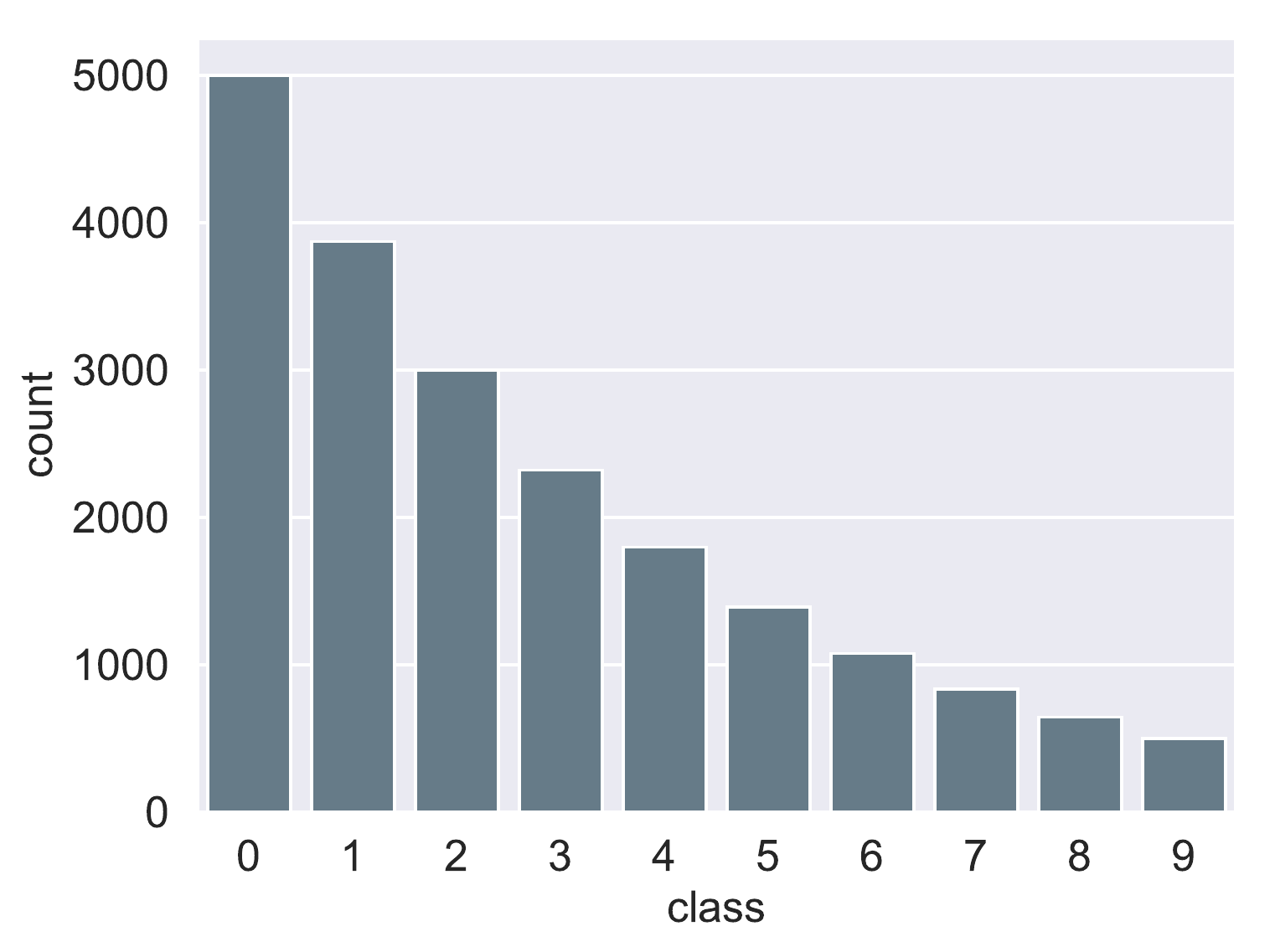}
         \caption{$\rho=10$}
         \label{fig:exp_10}
     \end{subfigure}
     \hfill
     \begin{subfigure}[b]{0.32\textwidth}
         \centering
         \includegraphics[width=\textwidth]{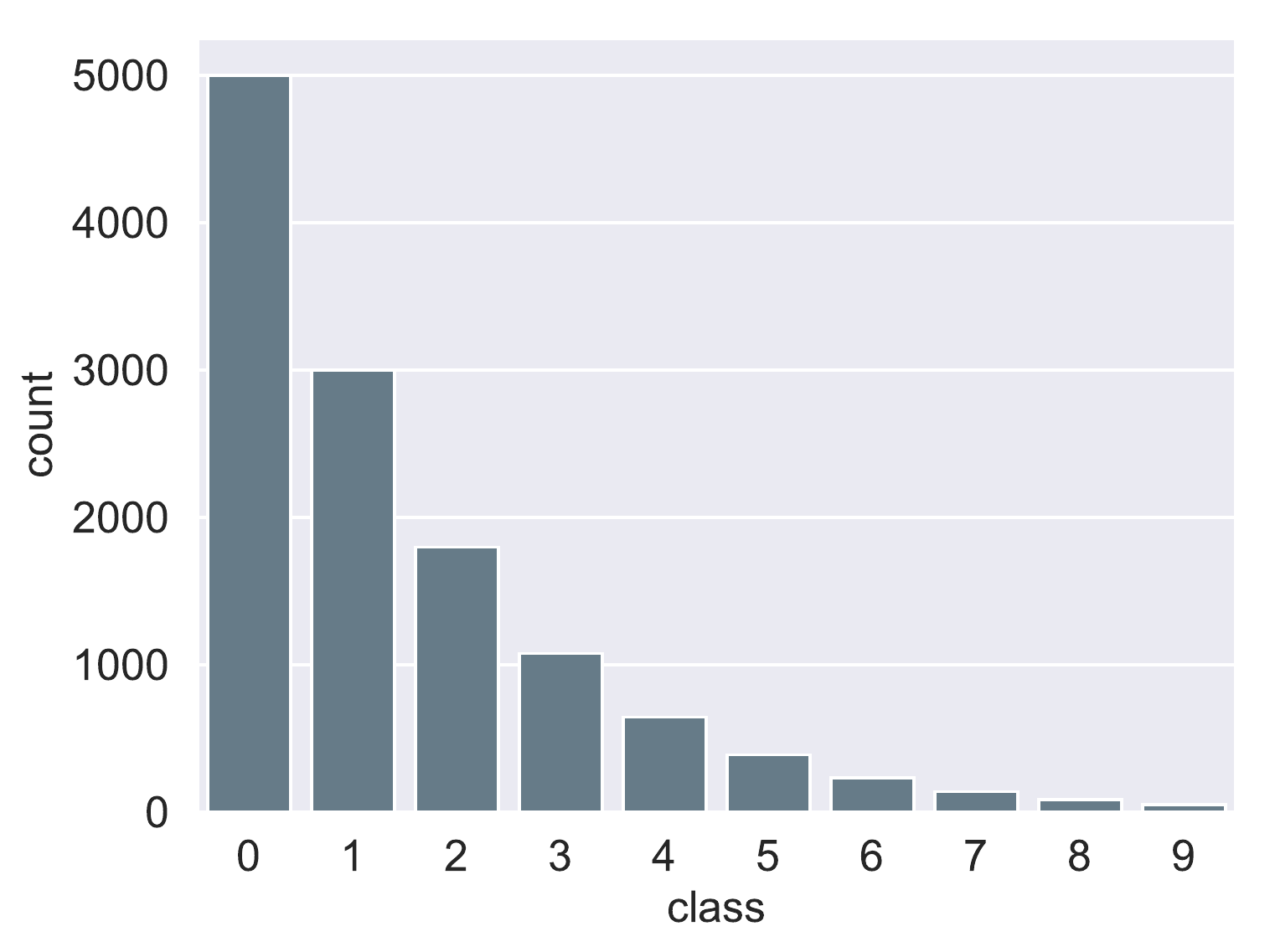}
         \caption{$\rho=100$}
         \label{fig:exp_100}
     \end{subfigure}
     \hfill
     \begin{subfigure}[b]{0.32\textwidth}
         \centering
         \includegraphics[width=\textwidth]{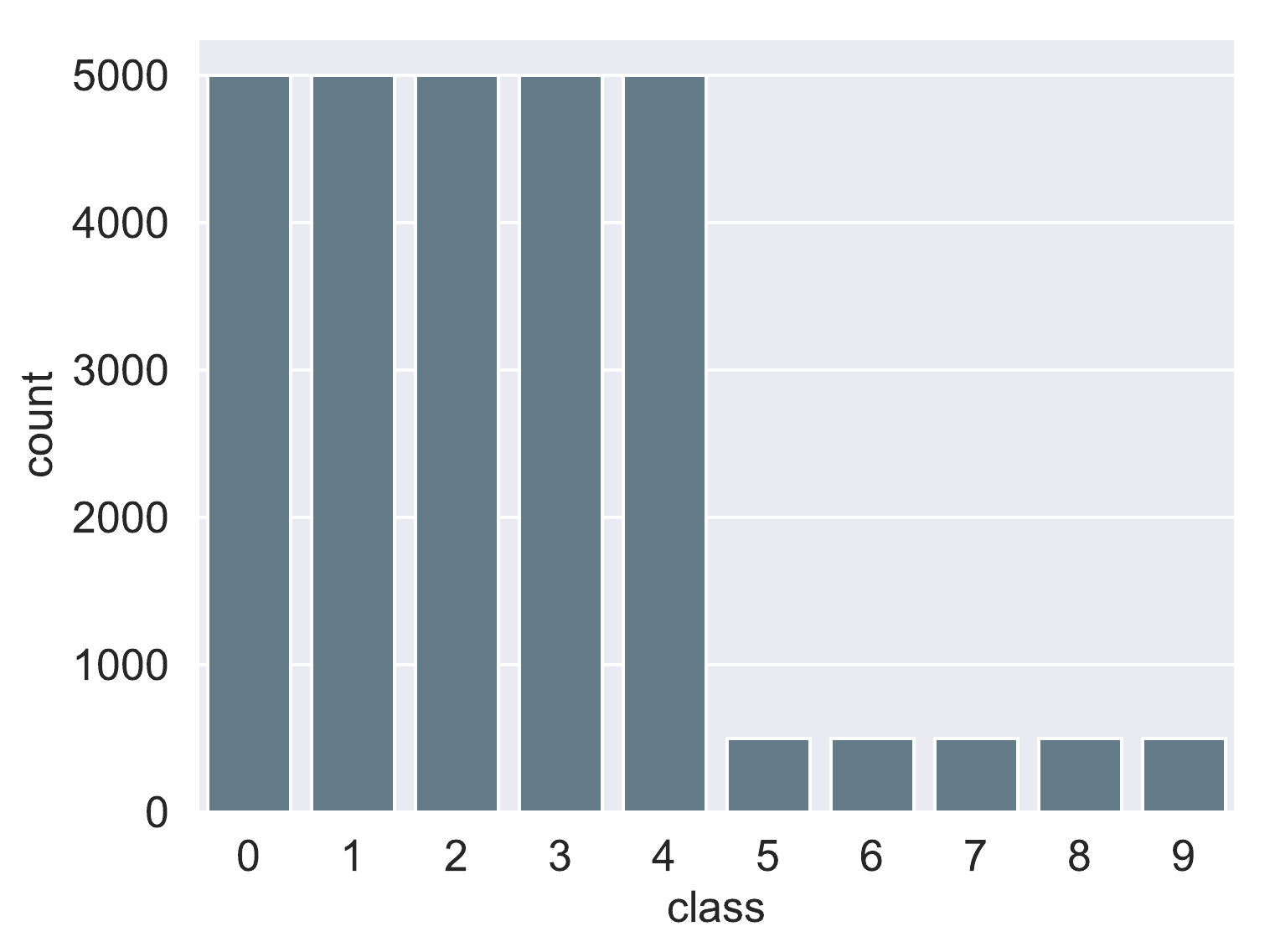}
         \caption{$\rho=10,\ \mu=0.5$}
         \label{fig:hard_10}
     \end{subfigure}
        \caption{Number of training examples per class in artificially created imbalanced CIFAR-10 datasets. Fig.~\ref{fig:exp_10} and Fig.~\ref{fig:exp_100} belong to long-tailed imbalance type and Fig.~\ref{fig:hard_10} is a step imbalance distribution.}
        \label{fig:imb_example}
\end{figure}

\textbf{Label distributions.} Some example distributions of our artificially created imbalance are shown in Figure~\ref{fig:imb_example}.

\textbf{Implementation details for CIFAR.} For CIFAR-10 and CIFAR-100, we follow the simple data augmentation in \citep{he2016deep} for training: 4 pixels are padded on each side, and a $32 \times 32$ crop is randomly sampled from the padded image or its horizontal flip. We use ResNet-32 \citep{he2016deep} as our base network, and use stochastic gradient descend with momentum of 0.9, weight decay of $2\times 10^{-4}$ for training. The model is trained with a batch size of 128 for 200 epochs. For fair comparison, we use an initial learning rate of 0.1, then decay by 0.01 at the 160th epoch and again at the 180th epoch. We also use linear warm-up learning rate schedule \citep{goyal2017accurate} for the first 5 epochs for fair comparison. Notice that the warm-up trick is essential for the training of re-weighting, but it won't affect other algorithms in our experiments. We tune $C$ to normalize $\Delta_j$ so that the largest enforced margin is $0.5$.

\textbf{Implementation details for Tiny ImageNet.} For Tiny ImageNet, we perform simple horizontal flips, taking random crops of size $64 \times 64$ from images padded by 8 pixels on each side. We perform 1 crop test with the validation images. We use ResNet-18 \citep{he2016deep} as our base network, and use stochastic gradient descend with momentum of 0.9, weight decay of $2\times 10^{-4}$ for training. We train the model using a batch size of 128 for 120 epochs with a initial learning rate of 0.1. We decay the learning rate by 0.1 at epoch 90. We tune $C$ to normalize $\Delta_j$ so that the largest enforced margin is $0.5$.

\textbf{Implementation details for iNaturalist 2018.} On iNaturalist 2018, we followed the same training strategy used by \citep{he2016deep} and trained ResNet-50 with 4 Tesla V100 GPUs. Each image is first resized by setting the shorter side to 256 pixels, and then a $224 \times 224$ crop is randomly sampled from an image or its horizontal flip. We train the network for 90 epochs with an initial learning rate of 0.1. We anneal the learning rate at epoch 30 and 60. For our two-stage training schedule, we rebalance the training data starting from epoch 60. We tune $C$ to normalize $\Delta_j$ so that the largest enforced margin is $0.3$.

\section{Additional Results}

\subsection{Feature visualization} 

To have a better understanding of our proposed LDAM loss, we use a toy example to visualize feature distributions trained under different schemes. We train a 7-layer CNN as adopted in~\citep{liu2017rethinking} on MNIST~\citep{lecun1998gradient} with step imbalance setting ($\rho=100, \mu=0.5$). For a more intuitive visualization, we constrain the feature dimension to 3 and normalize the feature before feeding it into the final fully-connected layer, allowing us to scatter the features on a unit hyper-sphere in a 3D frame. The visualization is shown in Figure~\ref{fig:visualize} with additional discussion in the caption. %

\begin{figure}
	\centering
	\begin{subfigure}[b]{0.24\textwidth}
		\centering
		\includegraphics[width=\textwidth]{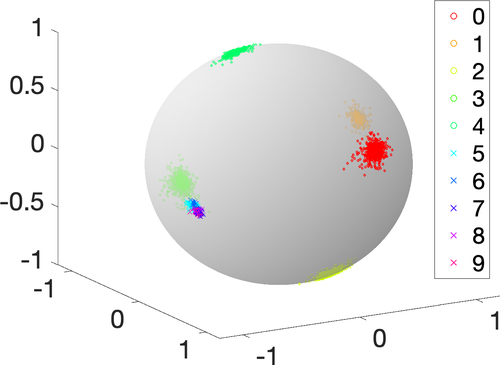}
		\caption{\ERM{} train}
		\label{fig:ERM_train}
	\end{subfigure}
	\hfill
	\begin{subfigure}[b]{0.24\textwidth}
		\centering
		\includegraphics[width=\textwidth]{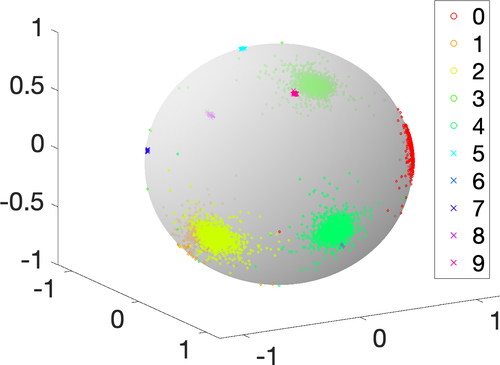}
		\caption{Re-sampling train}
		\label{fig:resample_train}
	\end{subfigure}
	\hfill
	\begin{subfigure}[b]{0.24\textwidth}
		\centering
		\includegraphics[width=\textwidth]{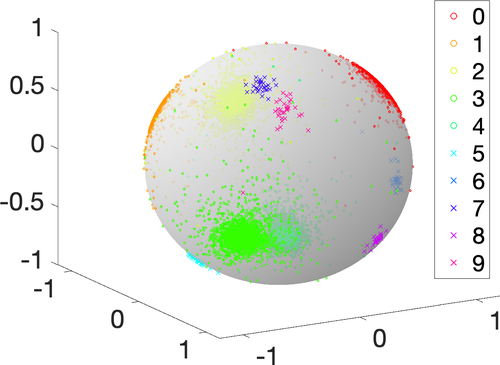}
		\caption{Re-weighting train}
		\label{fig:reweight_train}
	\end{subfigure}
	\hfill
	\begin{subfigure}[b]{0.24\textwidth}
		\centering
		\includegraphics[width=\textwidth]{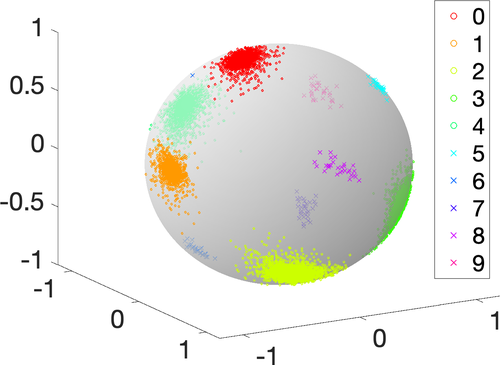}
		\caption{\BMCE{} train}
		\label{fig:BM_train}
	\end{subfigure}
	\\
	\begin{subfigure}[b]{0.24\textwidth}
		\centering
		\includegraphics[width=\textwidth]{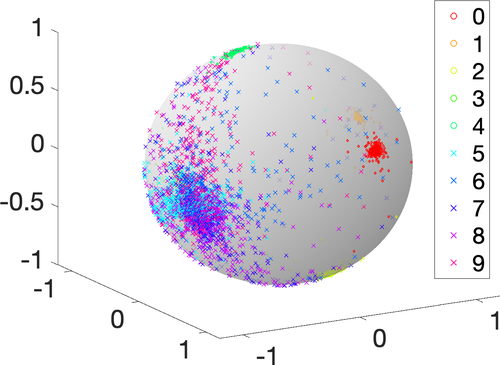}
		\caption{\ERM{} val}
		\label{fig:ERM_test}
	\end{subfigure}
	\hfill
	\begin{subfigure}[b]{0.24\textwidth}
		\centering
		\includegraphics[width=\textwidth]{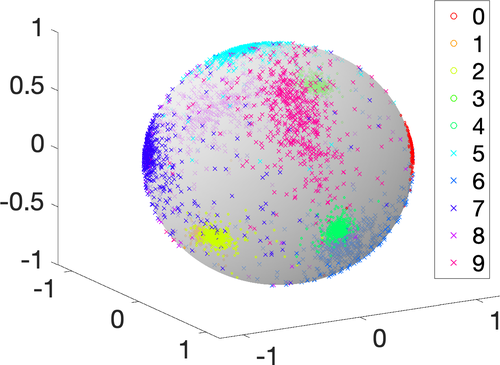}
		\caption{Re-sampling val}
		\label{fig:resample_test}
	\end{subfigure}
	\hfill
	\begin{subfigure}[b]{0.24\textwidth}
		\centering
		\includegraphics[width=\textwidth]{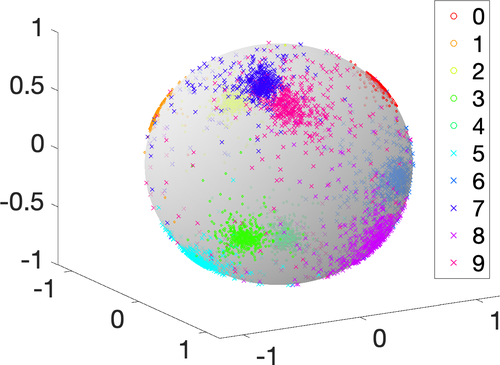}
		\caption{Re-weighting val}
		\label{fig:reweight_test}
	\end{subfigure}
	\hfill
	\begin{subfigure}[b]{0.24\textwidth}
		\centering
		\includegraphics[width=\textwidth]{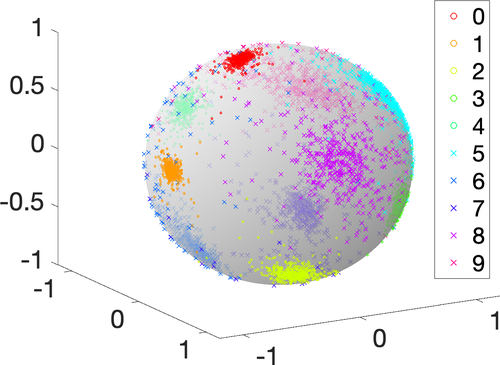}
		\caption{\BMCE{} val}
		\label{fig:BM_test}
	\end{subfigure}
	\caption{Visualization of feature distribution of different methods. We constrain the feature dimension to be three and normalize it for better illustration. The top row has the feature distribution on the training set and the second row the feature distributions on the validation set. We can see that \ldam{} appears to have more separate training features compared to the other methods. We note this visualization is only supposed to provide qualitative intuitions, and the differences between our methods and other methods may be more significant for harder tasks with higher feature dimension. (For example, here the accuracies of re-weighting and \ldam{} are very similar, whereas for large-scale datasets with higher feature dimensions, the gap is significantly larger.)}
	\label{fig:visualize}
\end{figure}

\subsection{Visual Recognition on imbalanced Tiny ImageNet}
\label{sec:tinyimagenet}

\begin{table}[]
	\caption{Validation error on imbalanced Tiny ImageNet with different loss functions and training schedules.}
	\label{tab:tiny-imagenet-table}
	\centering
	\begin{tabular}{cc|cc|cc|cc|cc}
		\toprule
		\multicolumn{2}{c|}{Imbalance Type}  & \multicolumn{4}{c|}{long-tailed}                     & \multicolumn{4}{c}{step}                  \\ \midrule
		\multicolumn{2}{c|}{Imbalance Ratio} & \multicolumn{2}{c|}{100} & \multicolumn{2}{c|}{10} & \multicolumn{2}{c|}{100} & \multicolumn{2}{c}{10} \\ \midrule
		Loss & Schedule          & Top-1       & Top-5       & Top-1       & Top-5      & Top-1       & Top-5       & Top-1       & Top-5      \\ \midrule
		ERM & SGD & 66.19 & 42.63 & 50.33 & 26.68 & 63.82 & 44.09 & 50.89 &   27.06 \\
		CB SM & SGD & 72.72 & 52.62 & 51.58 & 28.91 & 74.90 & 59.14 & 54.51 &    33.23    \\
		ERM & \tstagew{} & 64.57 & 40.79 & 50.03 & 26.19 & 62.36 & 40.84    & 49.17  & 25.91 \\    
		\BMCE{} & SGD & 64.04 & 40.46  & 48.08 & 24.80 & 62.54 & 39.27 & 49.08 & 24.52 \\
		\BMCE{} & \tstagew{} & \textbf{62.53} & \textbf{39.06} & \textbf{47.22} & 
		\textbf{23.84} & \textbf{60.63} & \textbf{38.12} & \textbf{47.43} & \textbf{23.26} \\ \bottomrule
	\end{tabular}
\end{table}

In addition to artificial imbalanced CIFAR, we further verify the effectiveness of our method on artificial imbalanced Tiny ImageNet. The Tiny ImageNet dataset has 200 classes. Each class has 500 training images and 50 validation images of size $64 \times 64$. We use the same strategy described above to create long-tailed and step imbalance versions of Tiny ImageNet. The results are presented in Table~\ref{tab:tiny-imagenet-table}. While Class-Balanced Softmax performs worse than the \ERM{} baseline, the proposed \BMCE{} and \tstagew{} demonstrate consistent improvements over \ERM{}.

\subsection{Comparing feature extractors trained by different schemes} \label{sec:feature_extractor}

\begin{figure}
     \centering
     \includegraphics[width=0.65\textwidth]{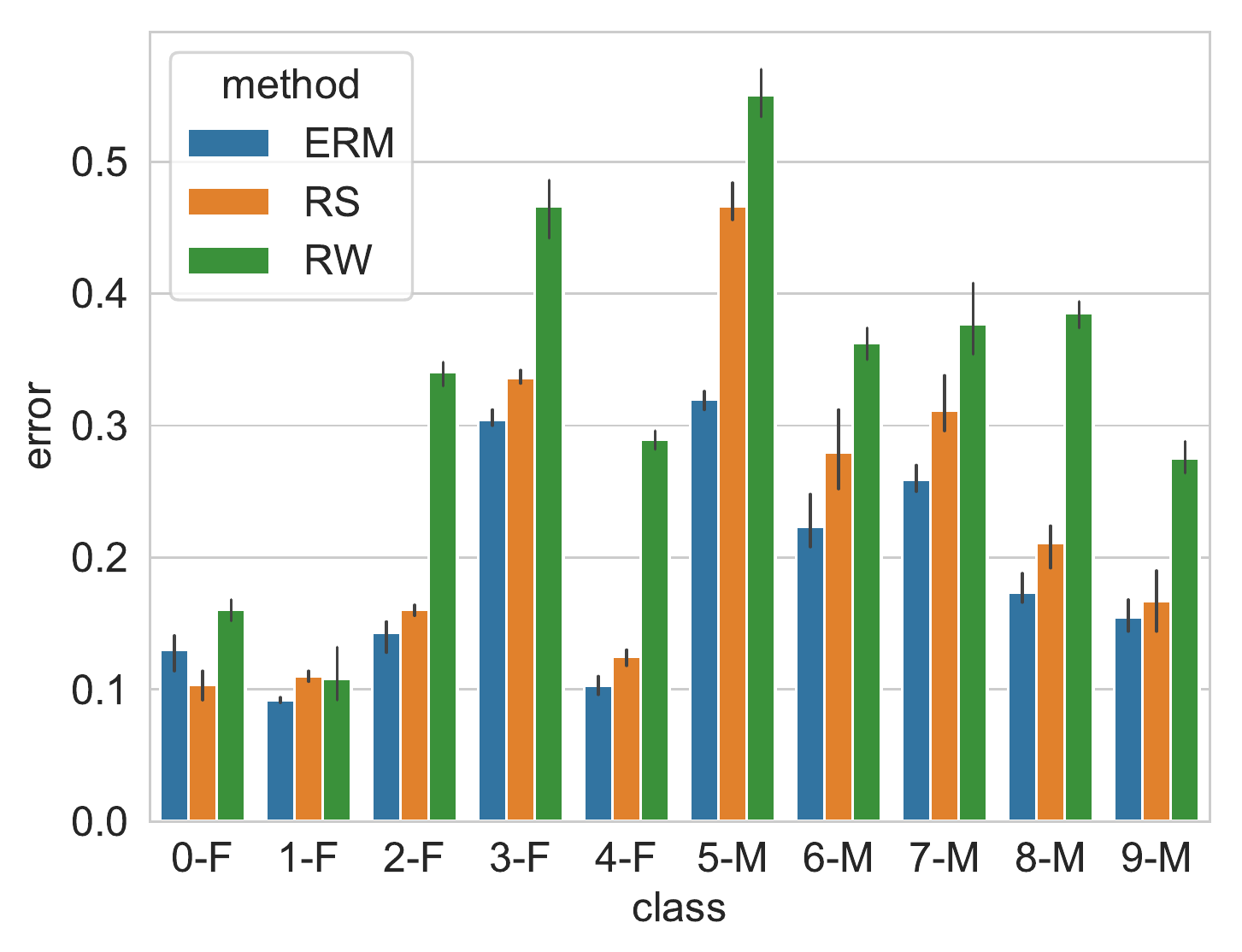}
     \caption{In the setting of training mbalanced CIFAR-10 dataset with step imbalance of $\rho=100, \mu=0.5$, to test the quality of the features obtained by the ERM, RW and RS before annealing the learning rate, we use a subset of the \textit{balanced} validation dataset to train linear classifiers on top of the features, and evaluate the per-class validation error on the rest of the validation data. (Little over-fitting in training the linear classifier is observed.) The left-5 classes are frequent and denoted with -F. The features obtained from ERM setting has the strongest performance, confirming our intuition that the second stage of \tstagew{} starts from better features. In the second stage, \tstagew{} re-weights the example again, adjusting the decision boundary and locally fine-tuning the features.}
     \label{fig:linear_finetune}
\end{figure}

As discussed in Section~\ref{sec:ablation}, we train a linear classifier on features extracted by backbone filters pretrained under different schemes. We could conclude that for highly imbalanced settings (step imbalance with $\rho=100, \mu=0.5$), backbone networks trained by \ERM{} learns the most expressive feature embedding compared with the other two methods, as shown in Figure~\ref{fig:linear_finetune}.

\subsection{Comparing DRW and DRS}

Our proposed deferred re-balancing optimization schedule can be combined with either re-weighting or re-sampling. We use re-weighting as the default choice in the main paper. Here we demonstrate through Table~\ref{tab:2-stage-table} that re-weighting and re-sampling exhibit similar performance when combined with deferred re-balancing scheme. This result could be explained by the fact that the second stage does not move the weights far. Re-balancing in the second stage mostly re-adjusts the decision boundary and thus there is no significant difference between using re-weighting or re-sampling for the second stage.

\begin{table}[]
\caption{Top-1 validation error of ResNet-32 trained with different training schedules on imbalanced CIFAR-10 and CIFAR-100.}
\label{tab:2-stage-table}
\centering
\begin{tabular}{c|cc|cc|cc|cc}
\toprule
Dataset Name           & \multicolumn{4}{c|}{Imbalanced CIFAR-10}                               & \multicolumn{4}{c}{Imbalanced CIFAR-100}                              \\ \midrule
Imbalance Type         & \multicolumn{2}{c|}{long-tailed} & \multicolumn{2}{c|}{step} & \multicolumn{2}{c|}{long-tailed} & \multicolumn{2}{c}{step} \\ \midrule
Imbalance Ratio        & \multicolumn{1}{c|}{100}  & 10 & \multicolumn{1}{c|}{100}   & 10   & \multicolumn{1}{c|}{100}  & 10 & \multicolumn{1}{c|}{100}   & 10   \\ \midrule
ERM & 29.64& 13.61& 36.70 & 17.50 & 61.68& 44.30 & 61.05& 45.37 \\
DRW & 25.14 & 13.12 & 28.40 & 14.49 & 59.34 & 42.68 & 58.86                            & 42.78      \\
DRS & 25.50 & 13.28 & 27.97 & 14.83 & 59.67 &  42.74   &   58.65                          & 43.21      \\\bottomrule
\end{tabular}
\end{table}

\begin{figure}
     \centering
     \begin{subfigure}[b]{0.32\textwidth}
         \centering
         \includegraphics[width=\textwidth]{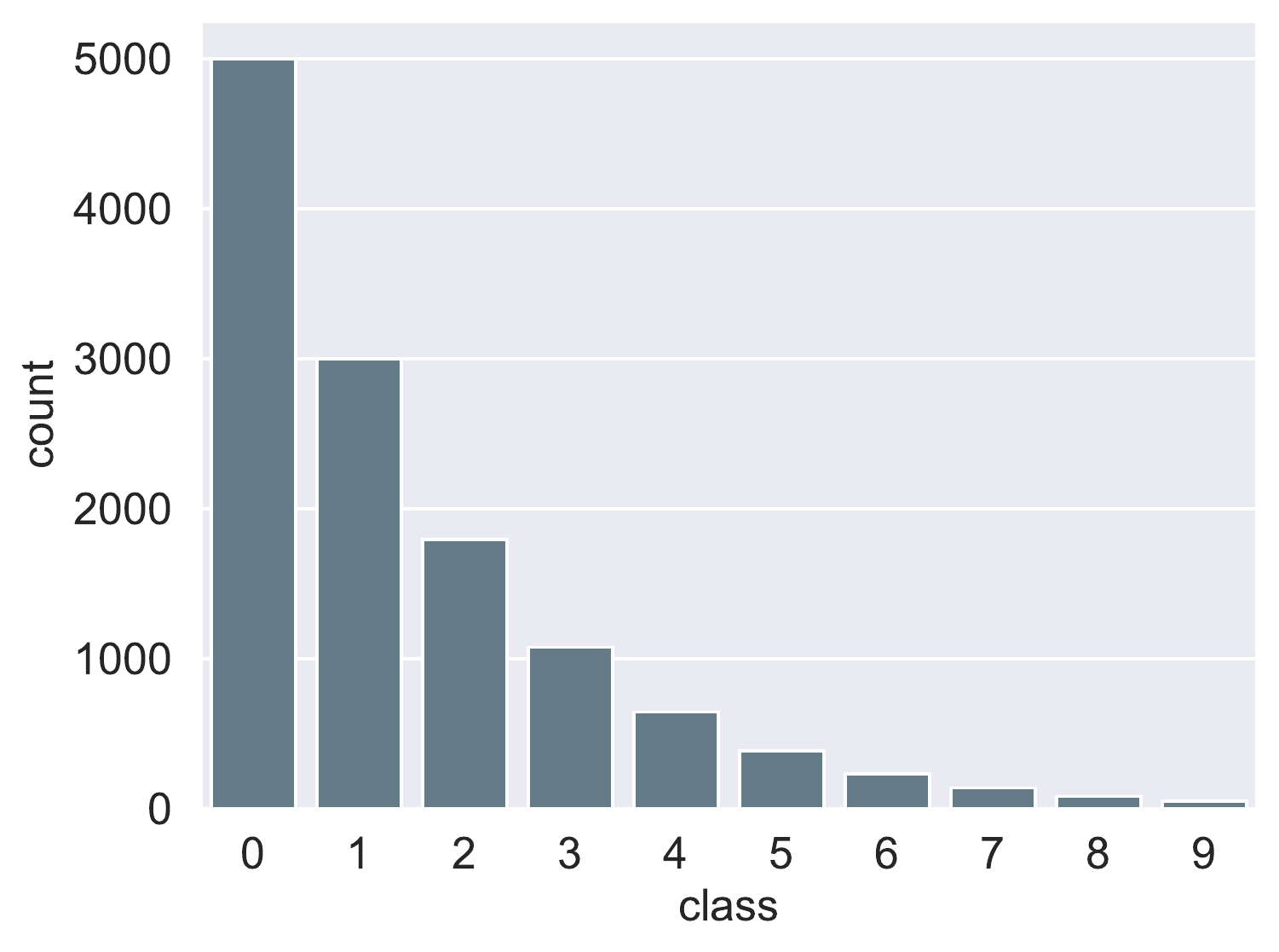}
         \caption{train set distribution}
         \label{fig:exp_train}
     \end{subfigure}
     \hfill
     \begin{subfigure}[b]{0.32\textwidth}
         \centering
         \includegraphics[width=\textwidth]{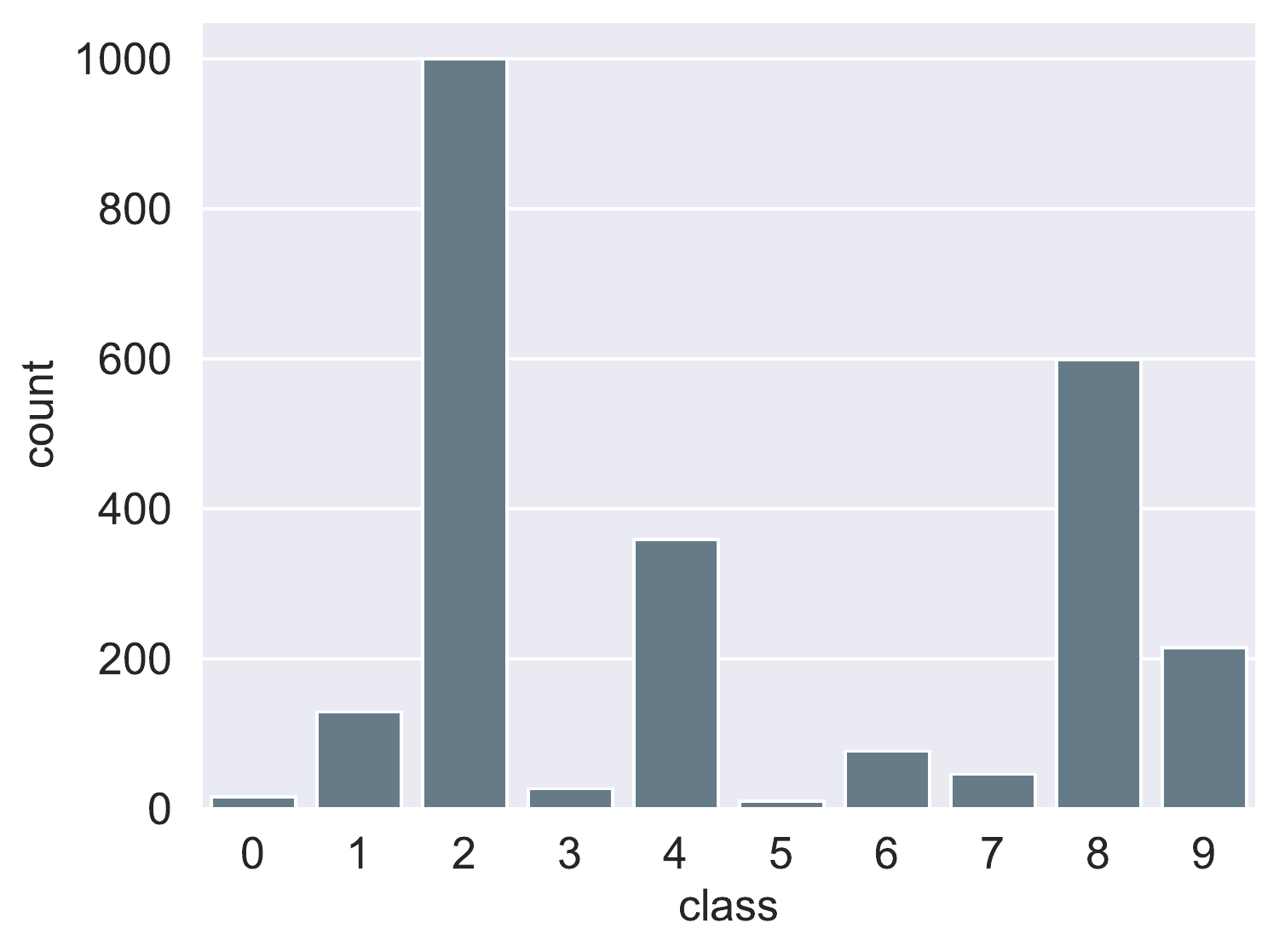}
         \caption{val set distribution 1}
         \label{fig:exp_val1}
     \end{subfigure}
     \hfill
     \begin{subfigure}[b]{0.32\textwidth}
         \centering
         \includegraphics[width=\textwidth]{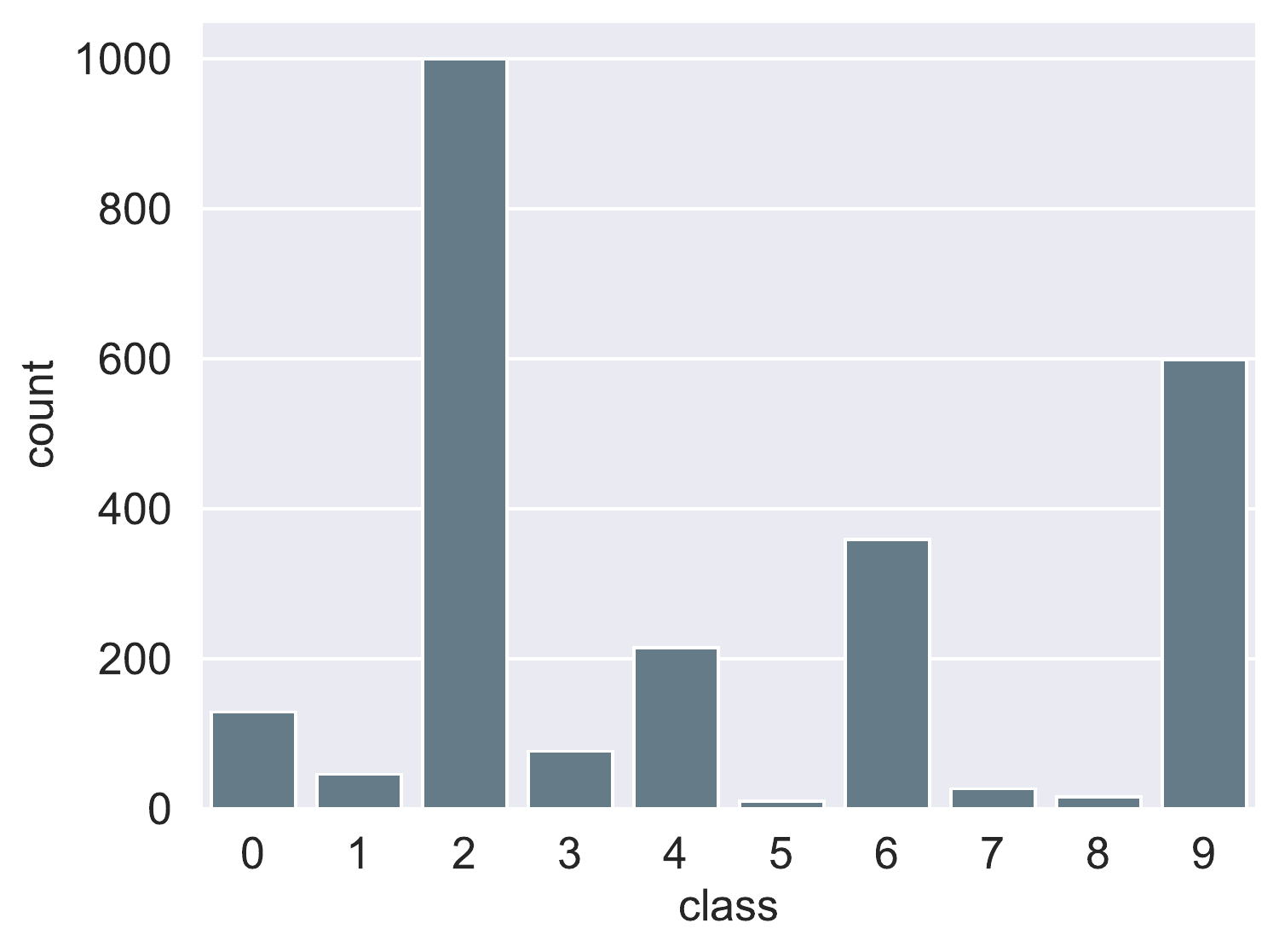}
         \caption{val set distribution 2}
         \label{fig:exp_val2}
     \end{subfigure}
        \caption{Example distributions when train and test distributions are both imbalanced. As discussed in~\ref{sec:imbalanced_test} we run two random seeds for generating test distributions. Here Figure~\ref{fig:exp_val1} denotes the left column in Table~\ref{tab:imbalanced_test}.}
        \label{fig:imb_test}
\end{figure}

\begin{table}[htbp]
\caption{Top-1 validation error of ResNet-32 on imbalanced training and imbalanced validation scheme for CIFAR-10. See Section~\ref{sec:imbalanced_test} for details.}
\label{tab:imbalanced_test}
\centering
\begin{tabular}{c|cc|cc|cc|cc}
\toprule
Imbalance Type  & \multicolumn{4}{c|}{long-tailed} & \multicolumn{4}{c}{step} \\ \midrule
Imbalance Ratio Train  & \multicolumn{2}{c|}{100}  & \multicolumn{2}{c|}{10}   & \multicolumn{2}{c|}{100} & \multicolumn{2}{c}{10}   \\ \midrule
Imbalance Ratio Val  & \multicolumn{1}{c|}{100}  & 100 & \multicolumn{1}{c|}{10}   & 10   & \multicolumn{1}{c|}{100}  & 100 & \multicolumn{1}{c|}{10}   & 10   \\ \midrule
ERM & 30.99 & 28.45 & 13.08 & 13.12 & 24.55 & 28.63 & 10.34 & 11.67 \\
CB-RW & 20.86 & 26.19 & 10.70 & 11.93 & 35.76 & 31.35 & 9.82 & 11.02      \\
LDAM-DRW &  \textbf{14.40} & \textbf{12.95} & \textbf{10.12} & \textbf{10.62} & \textbf{10.30} & \textbf{9.54} & \textbf{7.51} & \textbf{7.82}  \\\bottomrule
\end{tabular}
\end{table}

\subsection{Imbalanced Test Label Distributions}\label{sec:imbalanced_test}

Though the majority of our experiments follow the uniform test distribution setting, it could be extended to imbalanced test distribution naturally. Suppose the number of training examples in class $i$ is denoted by $n_i$ and the number of test examples in class $i$ is denoted by $n_i'$, then we could adapt the \ldam{} simply by encouraging the margin $\Delta_i$ for class $i$ with 
\begin{align}
\Delta_j \propto \left(\frac{n_i'}{n_i}\right)^{1/4}
\end{align}

To complement our main result, In Table~\ref{tab:imbalanced_test}, we demonstrate that this extended algorithm can also work well when the test distribution is imbalanced. We use the same rule as described in Section~\ref{sec:experiments} to generate imbalanced test label distribution and then permute randomly the frequency of the labels (so that the training label distribution is very different from the test label distribution.). For example, in the experiment shown in Figure~\ref{tab:imbalanced_test}, the training label distribution of the column of "long-tailed with $\rho=100$" follows Figure~\ref{fig:exp_train} (which is the same as Figure~\ref{fig:exp_100}) whereas the test label distribution is shown in Figure~\ref{fig:exp_val1} and Figure~\ref{fig:exp_val2}. For each of the settings reported in Table~\ref{tab:imbalanced_test}, we have run it with two different random seeds for generating the test label distribution, and we see qualitatively similar results. We refer to our code for the precise label distribution generated in the experiments.\footnote{Code available at \url{https://github.com/kaidic/LDAM-DRW}.}

\end{document}